\documentclass{article}

\usepackage[main, final]{neurips_2026}

\usepackage[utf8]{inputenc} % allow utf-8 input
\usepackage[T1]{fontenc}    % use 8-bit T1 fonts
\usepackage{hyperref}       % hyperlinks
\usepackage{url}            % simple URL typesetting
\usepackage{booktabs}       % professional-quality tables
\usepackage{amsfonts}       % blackboard math symbols
\usepackage{nicefrac}       % compact symbols for 1/2, etc.
\usepackage{microtype}      % microtypography
\usepackage{xcolor}         % colors

\usepackage[utf8]{inputenc}
\usepackage[T1]{fontenc}
\usepackage{microtype}
\usepackage{amsmath, amssymb, amsthm}
\usepackage{cleveref}
\usepackage{xcolor}
\usepackage{graphicx}
\usepackage{booktabs}
\usepackage{multirow}
\usepackage{array}
\usepackage{algorithm, algpseudocode}
\usepackage{enumitem}
\usepackage{pgfplots}
\pgfplotsset{
  compat=1.18,
  myplot/.style={
    tick label style={font=\scriptsize},
    label style={font=\scriptsize},
    title style={font=\scriptsize},
    legend style={
      font=\scriptsize,
      draw=none,
      fill=none,
      /tikz/every even column/.append style={column sep=0.6em},
    },
    grid=both,
    grid style={dotted, gray!30},
  }
}
\usepgfplotslibrary{groupplots}
\usepackage{amsmath} 
\usepackage{tikz}
\usetikzlibrary{arrows.meta, positioning, fit, backgrounds, calc, shapes.geometric, decorations.pathreplacing}

\usepackage{tikz}
\usetikzlibrary{
  arrows.meta,
  positioning,
  fit,
  backgrounds,
  calc,
  decorations.pathreplacing
}
\usepackage{graphicx} % for \resizebox

\usepackage{booktabs}
\usepackage{tabularx}
\usepackage{array}
\usepackage[table]{xcolor}

\definecolor{TableHeader}{RGB}{245,247,250}
\definecolor{TableGroup}{RGB}{250,250,252}
\definecolor{BestRow}{RGB}{237,247,237}
\definecolor{SecondRow}{RGB}{239,244,252}

\hypersetup{
colorlinks=true
}
\newtheorem{proposition}{Proposition}
\newtheorem{theorem}{Theorem}
\newtheorem{definition}{Definition}
\newtheorem{assumption}{Assumption}
\newtheorem{corollary}{Corollary}

\title{Behavior Pack Optimization \\for Video MLLM Post-Training}

\author{%
  \textbf{Zhaolu Kang}$^{1,2*}$, \textbf{Shiyu Liu}$^{1,*}$, \\\textbf{Tailong Luo}, \textbf{Wei Zhang}, \textbf{Yingjie He}$^{1}$, 
  \textbf{Lei Wei}$^{1}$, \textbf{Guansu Wang}$^{3}$, \textbf{Liang He}$^{4}$, \textbf{Siheng Wang}, \\\textbf{Guangyuan Dong}, 
  \textbf{Jiaqi Su}, \textbf{Shuang Chen}, \textbf{Haoyu Ji}, \textbf{Qishi Zhan}, \textbf{Kaiyue Zhou}$^{2,\dagger}$ \\
  $^{1}$Peking University \quad $^{2}$Chengdu Minto Tech \\
  $^{3}$The University of Melbourne \quad $^{4}$Stanford University \\
  \texttt{zlkang25@stu.pku.edu.cn}
}

\begin{document}

\maketitle

\renewcommand{\thefootnote}{\fnsymbol{footnote}}
\footnotetext[1]{Equal Contribution. $^{\dagger}$Corresponding Author.}
\renewcommand{\thefootnote}{\arabic{footnote}}

% ===========================================================
%  §0 Abstract  — 180–200 words target
%  Status: draft v0.5 (prose macros inlined; method name = BPO)
% ===========================================================
\begin{abstract}
Video multimodal large language models (MLLMs) keep climbing video question
answering benchmarks, yet shuffling the frames, masking the segment that
supports the answer, or occluding the target object barely changes their
predictions. The accuracy rests on appearance and language priors, not on the
temporal evidence the question asks for. We trace this to the unit of
post-training: rewards are computed on a single response to the original clip,
so the model is never asked to behave consistently across views. We propose
Behavior Pack Optimization (BPO), which replaces the single response with a
behavior pack of outputs across counterfactual views chosen by question type,
scored jointly. The pack reward asks for stability when the intervention is
irrelevant, sensitivity when key evidence is removed, and abstention when no
evidence remains. To keep this objective stable at small pack sizes, BPO uses
an anchor-relative advantage: the response on the original view serves as a
per-prompt reference instead of a group mean over mixed views. On TempCompass,
MVBench, and NExT-QA, BPO improves the macro accuracy of
Qwen2.5-VL-7B-Instruct by 4.7 pp, the temporal-hard subset by 7.8 pp, and
abstention F1 by 20.0 pp over a budget-matched vanilla GRPO baseline from the
same SFT checkpoint. The gains transfer to Video-MME, LongVideoBench, and to
LLaVA-Video-7B; ablations confirm they follow the view sets, not the rollout
count. We hope this pack-level perspective offers a useful starting point for
the video MLLM and multimodal post-training community as the field moves toward
evidence-grounded video reasoning.
\end{abstract}
% ===========================================================
%  §1 Introduction  — target ~1.0–1.2 page
%  Style: NeurIPS calm-register, modeled on DPO / CoT / Toolformer /
%  DeepSeekMath. NO formulas, NO em-dashes, NO \emph, NO own
%  experiment numbers in the problem section. P2 articulates the
%  problem in prior work; P3 introduces the method.
%  Status: draft v0.6
% ===========================================================
\section{Introduction}
\label{sec:intro}

Video multimodal large language models (MLLMs) are now common
backbones for video question answering.
% %, with strong performance on
% benchmarks such as MVBench~\cite{li2024mvbench},
% TempCompass~\cite{liu2024tempcompass}, and
% NExT-QA~\cite{xiao2021next}. 
Their post-training recipe, however, still often follows the text-only LLM pipeline: supervised fine-tuning on instruction-formatted question-answer pairs, optionally followed by preference optimization or group-relative reinforcement learning~\cite{rafailov2023direct, shao2024deepseekmath, guo2025deepseek}. Recent video RL methods add temporal rewards,
augmented negatives, or pairwise preferences over manipulated
clips~\cite{feng2025video, li2025temporal}. These signals are useful,
but the optimized object often remains local: one response on one
video, a same-input response group, or a comparison tied to one edited
clip. This design fits static answer matching, but it misses a defining
property of video QA: the answer should often change, or become
unsupported, when the visual evidence is changed or removed.

A local optimization unit creates a structural gap. When the loss
grades responses in isolation, the policy receives no direct signal
about whether its behavior changes appropriately across related
counterfactual views. Benchmarks such as MVBench and TempCompass stress
temporal order, direction, and evidence-dependent
reasoning~\cite{li2024mvbench, liu2024tempcompass}, while controlled
probes show that video MLLMs can preserve answers under frame
shuffling, object occlusion, or removal of answer-supporting segments.
Adding manipulated clips broadens the training distribution, but if
each response is still graded against a local target, the objective
does not specify how the model should behave across interventions. The
bottleneck is therefore not only what the model is shown, but also the
granularity at which behavior is optimized.

We propose \textbf{B}ehavior \textbf{P}ack \textbf{O}ptimization (BPO), a video RL post-training
framework that optimizes a behavior pack instead of an isolated
response. For each video-question pair, BPO constructs a compact pack
with the original clip and question-type-conditioned counterfactual
views: evidence-removal views for event, binding, and count questions,
and temporal interventions such as reversal or shuffling for order and
direction questions. The policy is rolled out on every view, and the
outputs are scored jointly by a verifier-based reward that combines
per-view correctness with three cross-view contracts:
stability under label-preserving perturbations, sensitivity to
answer-changing interventions, and honest abstention when no diagnostic
evidence remains. Thus, BPO is not simply data augmentation: it changes
the optimization unit from a single video-answer pair to a cross-view
behavior pack.

Within this objective, the advantage estimator also matters. Standard
GRPO centers rewards by statistics of the sampled group, but video
post-training often permits only small groups because of memory and
decoding cost. BPO instead uses the rollout on the original,
unintervened clip as a per-prompt anchor, yielding an anchor-relative
advantage across the pack. The anchor is graded against the
ground-truth answer on the original clip, so intervened-view rewards
alone cannot set the reference. We analyze this estimator in
Section~\ref{sec:method:anchor} and show that it can reduce variance
when rewards across the anchor and intervened views are correlated.

We evaluate BPO on TempCompass, MVBench, and NExT-QA~\cite{xiao2021next} using
Qwen2.5-VL-7B-Instruct as the backbone. All main comparisons use the
same SFT initialization and match the per-prompt rollout budget under
compute-controlled training. Against vanilla GRPO and budget-matched
re-implementations of Video-R1~\cite{feng2025video} and
TPO~\cite{li2025temporal}, BPO improves macro accuracy by~4.7\,pp,
temporal-hard accuracy by~7.8\,pp, and abstention F1 on
insufficient-evidence subsets by~20.0\,pp. The behavior indicators
move in the directions specified by the contracts; ablations separate
type-aware interventions from extra rollouts; and the gains transfer
to longer-video evaluation and a second backbone.

Our contributions are:
\begin{itemize}[leftmargin=*, noitemsep, topsep=0pt]
\item We identify optimization granularity, rather than data coverage
alone, as a bottleneck connecting shortcut behavior in video QA to the
post-training objective.
\item We introduce BPO, a pack-level objective that combines
question-type-conditioned view sets, cross-view behavior contracts,
and an anchor-relative advantage estimator.
\item Through budget-aligned comparisons, we show that BPO improves
accuracy, abstention, and contract-aligned behavior indicators, with
gains transferring to longer-video evaluation and a second backbone.
\end{itemize}

\begin{figure*}[!t]
\centering
\resizebox{\textwidth}{!}{%
\begin{tikzpicture}[
  x=1cm, y=1cm,
  font=\small,
  >=Latex,
  arr/.style={-{Latex[length=2mm]}, thin},
  darr/.style={-{Latex[length=2mm]}, dashed, thin},
  clip/.style={
    draw, rounded corners=2pt, minimum width=1.1cm, minimum height=0.58cm,
    inner sep=2pt, fill=gray!8
  },
  resp/.style={
    draw, rounded corners=2pt, minimum width=1.0cm, minimum height=0.52cm,
    inner sep=2pt, fill=gray!4
  },
  view/.style={
    draw, rounded corners=2pt, minimum width=1.25cm, minimum height=0.58cm,
    inner sep=2pt, fill=blue!7
  },
  anchorv/.style={
    draw, rounded corners=2pt, minimum width=1.6cm, minimum height=0.58cm,
    inner sep=2pt, fill=orange!10, line width=0.8pt
  },
  rewbox/.style={
    draw, rounded corners=2pt, minimum width=5.6cm, minimum height=0.62cm,
    inner sep=3pt, fill=green!8
  },
  ann/.style={font=\footnotesize\itshape, text=red!70!black}
]

% ================= (a) GRPO on the left =================
\node[font=\bfseries\large, align=center] (tA) at (0,2.55)
  {(a) GRPO: response-level, sampled-response baseline};

\node[clip] (gv) at (0,1.45) {$v_0$};

\node[resp] (g1) at (-1.80,0.25) {$y_1$};
\node[resp] (g2) at (-0.65,0.25) {$y_2$};
\node[font=\small] (gd) at (0.45,0.25) {$\cdots$};
\node[resp] (g3) at (1.55,0.25) {$y_K$};

\foreach \t in {g1,g2,g3} \draw[arr] (gv) -- (\t);
\draw[arr] (gv) -- (gd);

\draw[decorate, decoration={brace, amplitude=3pt, mirror}]
  (-2.20,-0.15) -- (2.00,-0.15)
  node[midway, below=4pt, font=\small] {sample mean $\bar r$};

\node[font=\small] at (0,-1.00) {advantage: $A_k = r_k - \bar r$};

% ================= vertical divider =================
\draw[gray!50, dashed] (4.575,-2.00) -- (4.575,2.85);

% ================= (b) BPO on the right =================
\node[font=\bfseries\large, align=center] (tB) at (9.15,2.55)
{(b) BPO : pack-level, anchor-relative baseline};

\node[clip] (bv) at (9.15,1.65) {video};

\node[anchorv] (v0) at (6.70,0.45) {$v_0$ (anchor)};
\node[view]    (vk) at (8.45,0.45) {key-mask};
\node[view]    (vr) at (10.15,0.45) {reverse};
\node[view]    (vs) at (11.85,0.45) {shuffle};

\foreach \t in {v0,vk,vr,vs} \draw[arr] (bv) -- (\t);

\node[resp] (y0) at (6.70,-0.70) {$y_{v_0}$};
\node[resp] (yk) at (8.45,-0.70) {$y_{\mathrm{km}}$};
\node[resp] (yr) at (10.15,-0.70) {$y_{\mathrm{rev}}$};
\node[resp] (ys) at (11.85,-0.70) {$y_{\mathrm{shf}}$};

\draw[arr] (v0) -- (y0);
\draw[arr] (vk) -- (yk);
\draw[arr] (vr) -- (yr);
\draw[arr] (vs) -- (ys);

\node[rewbox] (rc) at (9.15,-1.65)
  {cross-view reward $R^{(v)} = R_{\mathrm{acc}} + R_{\mathrm{contract}} - \mathrm{Pen}$};

\foreach \t in {y0,yk,yr,ys} \draw[arr] (\t) -- (rc);

% pack dashed frame (optimization unit)
\begin{pgfonlayer}{background}
  \node[
    draw=red!65!black,
    dashed,
    line width=0.8pt,
    rounded corners=2pt,
    fit=(v0)(vk)(vr)(vs)(y0)(yk)(yr)(ys),
    inner xsep=6pt,
    inner ysep=5pt
  ] (pack) {};
\end{pgfonlayer}

% annotations
\node[ann, anchor=west, align=left] at (12.75,0.00)
  {pack\\= optimization unit};
\draw[darr, red!65!black] (12.65,-0.02) -- ($(pack.east)+(0.03,0)$);

\draw[darr, orange!75!black, line width=0.8pt]
  (y0.north east) to[bend left=23]
  node[above, font=\scriptsize, text=orange!70!black]
  {$A_v = r_v - r_{v_0}$}
  (yk.north west);

\end{tikzpicture}%
}
\vspace{-1.7em}
\caption{(a) GRPO samples $K$ rollouts from the original clip $v_0$, centers rewards by the sampled-response mean $\bar r$, and optimizes each response separately.
(b) BPO samples one rollout per view from a question-type-conditioned set, scores the pack jointly with cross-view behavior contracts, and centers advantages relative to the $v_0$ anchor rollout.
% Red box: optimization unit; orange arrow: baseline reference.
}
\label{fig:teaser}
\vspace{-0.3em}
\end{figure*}
% ===========================================================
%  §2 Related Work  — target ~0.4 page, two compact paragraphs,
%  no per-method paragraph heads, no comparison table in body.
%  A full per-axis discussion and the comparison table live in
%  Appendix G.
% ===========================================================
\begingroup
\makeatletter
\renewcommand\paragraph{\@startsection{paragraph}{4}{\z@}%
  {0.35ex plus 0.1ex minus 0.1ex}% space before paragraph heading
  {-0.75em}% run-in space after heading
  {\normalfont\normalsize\bfseries}}
\makeatother

\section{Related Work}
\label{sec:related}

\paragraph{Faithful temporal grounding.}
Video MLLMs are commonly post-trained with supervised fine-tuning on
instruction-style video question-answer pairs~\cite{zhang2024llava,
wang2024internvideo2, bai2025qwen25vltechnicalreport}, and evaluated on
benchmarks that stress temporal order, multi-step events, and long-video
reasoning~\cite{li2024mvbench, liu2024tempcompass, xiao2021next,
fu2025video, wu2024longvideobench}. High benchmark scores, however, do
not guarantee faithful temporal grounding. A model may answer from
appearance cues or language priors, or preserve the same answer after the
supporting evidence is reordered, masked, or removed. Counterfactual
evaluation and evidence-sensitivity diagnostics therefore complement
static answer matching: a faithful model should remain stable under
label-preserving perturbations, but change or abstain when an
intervention changes or removes the evidence needed to
answer~\cite{li2025temporal, poppi2026countervid}.

\paragraph{Video RL and counterfactual supervision.}
Preference optimization and reinforcement learning offer a way to improve
behavior beyond supervised imitation. DPO learns from preference pairs
without an explicit reward model, while GRPO removes the value model and
estimates relative advantages from responses sampled for the same
prompt~\cite{rafailov2023direct, shao2024deepseekmath, guo2025deepseek}.
Recent video post-training methods introduce video-specific rewards and
training signals: Video-R1 uses shuffled-frame temporal rewards,
VideoChat-R1 applies GRPO-style RFT to spatio-temporal perception,
VideoRFT adds semantic-consistency rewards, DeepVideo-R1 studies
difficulty-aware GRPO variants, and SpaceR focuses on spatial
RLVR~\cite{feng2025video, li2025videochat, wang2025videorft,
park2025deepvideo, ouyang2025spacer}. Another line constructs
preferences from edited, temporally perturbed, or synthesized
counterfactual videos~\cite{li2025temporal, poppi2026countervid}. These
works broaden the supervision available to video post-training, but the
training signal is typically applied to an individual response, a
same-input response group, or a pairwise comparison tied to one
manipulated instance, rather than to the joint behavior across a
structured set of views.

\paragraph{Behavior-pack optimization.}
BPO changes this optimization granularity. It optimizes a compact
question-type-conditioned behavior pack in which outputs from the
original clip and diagnostic counterfactual views are scored jointly
under contracts for stability, sensitivity, and honest abstention. This
is orthogonal to many reward or preference designs: BPO keeps the
GRPO-style clipped surrogate, but changes the unit being optimized and
replaces the batch-normalized pack reference with an anchor-relative
reference from the original-clip rollout. It is also distinct from
reasoning-trace or representation-based approaches, which improve how
models structure intermediate reasoning~\cite{shao2024visual}; BPO
changes the unit over which video behavior is rewarded.

\endgroup

% ===========================================================
%  §3 Method
%  Polished version: formulation-driven, Video-R1-style,
%  experiment-preserving, and ambiguity-controlled.
% ===========================================================
\section{Method}
\label{sec:method}

We cast video RL post-training as cross-view behavior optimization. For each
video-question sample, Behavior Pack Optimization (BPO) builds the original
view and a small set of question-typed counterfactual views. The policy
produces one rollout per view; a verifier scores view-level correctness and
cross-view contracts relative to the original-view anchor.

Compared with vanilla GRPO, BPO changes both the optimization unit and the
baseline: it trains on behavior packs rather than isolated responses, and it
uses an anchor-relative advantage rather than a batch-normalized pack baseline. This makes
the training signal depend on whether answers change, stay stable, or abstain
under evidence interventions. \Cref{fig:bpo_overview} summarizes the pipeline;
the following subsections define its components.

\begin{figure}[t]
\centering
\includegraphics[width=\linewidth]{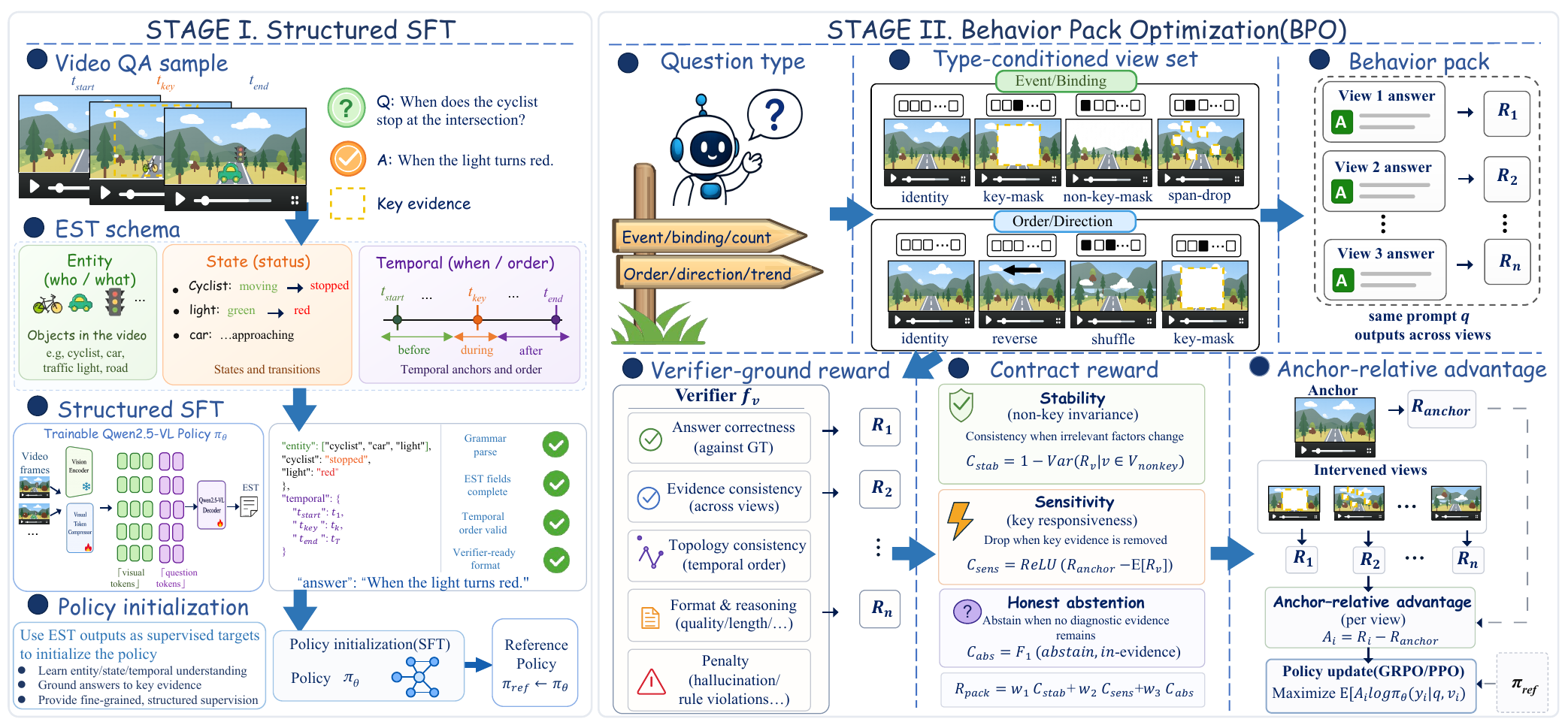}
\vspace{-1.2em}
\caption{Overview of Behavior Pack Optimization. Stage~A installs the
structured EST interface with SFT. Stage~B routes each question to a
question-typed view set, forms a behavior pack across counterfactual views,
scores view-specific and cross-view contract rewards, and updates the policy
with anchor-relative advantages.}
\label{fig:bpo_overview}
\end{figure}

% -----------------------------------------------------------
\subsection{Setup and view records}
\label{sec:method:setup}

A training sample is $(\mathbf{V},q,y^\star)$, optionally with structured
metadata such as key-evidence segments, target objects, and counterfactual
annotations. The policy $\pi_\theta$ outputs an answer and an Entity-State-Temporal
(EST) trace installed during Stage~A SFT (Appendix~\ref{app:est}); a deterministic
grammar parses the trace with success rate $\geq 0.99$.

We represent each view as a record
\begin{equation}
  \omega_v=(v,\mathbf{V}_v,q_v,y_v^\star,\xi_v),
  \label{eq:view-record}
\end{equation}
where $v$ denotes the intervention type, $\mathbf{V}_v$ is the transformed
video, $q_v$ is the corresponding question, $y_v^\star$ is the view-specific
target answer, and $\xi_v$ stores intervention metadata used by the verifier
and the contract reward. The identity view $v_0$ satisfies
\begin{equation}
  (\mathbf{V}_{v_0},q_{v_0},y_{v_0}^\star)
  =
  (\mathbf{V},q,y^\star).
  \label{eq:identity-view}
\end{equation}
For counterfactual views, $y_v^\star$ is inherited for label-preserving
views, derived from deterministic counterfactual rules when available,
or replaced by an insufficient-evidence target when the intervention
removes the required support. The full view pool is
\begin{equation}
  \mathcal{V}_{\mathrm{full}}
  =
  \{v_0,\text{key-mask},\text{non-key-mask},
  \text{span-drop},\text{reverse},\text{shuffle}\}.
  \label{eq:full-view-pool}
\end{equation}
View construction details are in Appendix~\ref{app:views}.

% -----------------------------------------------------------
\subsection{Behavior packs}
\label{sec:method:pack}

For a sample $(\mathbf{V},q)$ and a view set
$\mathcal{V}\subseteq\mathcal{V}_{\mathrm{full}}$ with
$v_0\in\mathcal{V}$, the policy produces a behavior pack
\begin{equation}
  \mathcal{B}(\mathbf{V},q;\mathcal{V})
  =
  \Bigl\{
  (v,y_v):
  v\in\mathcal{V},\
  \omega_v=(v,\mathbf{V}_v,q_v,y_v^\star,\xi_v),\
  y_v\sim\pi_\theta(\cdot\mid \mathbf{V}_v,q_v)
  \Bigr\}.
  \label{eq:pack}
\end{equation}
Unlike GRPO response groups, BPO samples across views of the same prompt, so a
pack exposes whether the policy preserves, changes, or abstains as evidence is
intervened on. This is different from treating counterfactual clips as ordinary
data augmentation: the views are decoded in the same optimization unit and are
scored jointly through contracts to the original view. As a result, the update
can reward a response that stays unchanged under a label-preserving edit while
penalizing the same unchanged answer after evidence has been removed.

\paragraph{Pack-level batch baseline.}
For the ablation baseline, the scalar pack reward is the mean per-view reward:
\begin{equation}
  R_{\mathrm{pack}}(\mathcal{B})
  =
  \frac{1}{|\mathcal{V}|}
  \sum_{(v,y_v)\in\mathcal{B}}
  R_{\mathrm{view}}(v,y_v;y_{v_0}).
  \label{eq:rpack}
\end{equation}
For a mini-batch of $G$ packs, the batch-normalized pack advantage is
\begin{equation}
  \hat{A}_{\mathrm{pack}}^{(i)}
  =
  \frac{
  R_{\mathrm{pack}}\bigl(\mathcal{B}^{(i)}\bigr)-\mu_G
  }{
  \sigma_G+\epsilon
  },
  \qquad
  \mu_G,\sigma_G \text{ computed over the } G \text{ packs in the mini-batch}.
  \label{eq:advpack}
\end{equation}
The clipped surrogate applies this shared pack-level advantage view-wise:
\begin{equation}
  \mathcal{L}_{\mathrm{group}}
  =
  -\mathbb{E}_{\mathcal{B}^{(i)}}\!\left[
    \frac{1}{|\mathcal{V}|}
    \sum_{(v,y_v)\in\mathcal{B}^{(i)}}
    \min\!\left(
      \rho_v\hat{A}_{\mathrm{pack}}^{(i)},
      \mathrm{clip}(\rho_v,1-\epsilon,1+\epsilon)
      \hat{A}_{\mathrm{pack}}^{(i)}
    \right)
  \right],
  \label{eq:loss-group}
\end{equation}
where
\begin{equation}
  \rho_v
  =
  \frac{
  \pi_\theta(y_v\mid \mathbf{V}_v,q_v)
  }{
  \pi_{\mathrm{old}}(y_v\mid \mathbf{V}_v,q_v)
  }
  \label{eq:importance-ratio}
\end{equation}
is the per-view importance ratio. With only the identity view and no
cross-view contracts, this reduces to a single-view clipped update. With
multiple views, the shared pack reward couples responses through the same
intervention set.

\paragraph{Small-pack estimation.}
Since video RL permits only small packs ($|\mathcal{V}|\leq 4$ and
$G\approx8$--$16$), batch centering in \Cref{eq:advpack} mixes prompt difficulty
with intervention behavior and can yield noisy variance estimates. The
normalization is over packs in the mini-batch, not over multiple rollouts of the
same prompt as in standard GRPO. BPO replaces this batch-level reference with a
prompt-level anchor.

% -----------------------------------------------------------
\subsection{Anchor-relative optimization}
\label{sec:method:anchor}

BPO uses the original-view rollout as a prompt-specific reference for all
intervened views, avoiding reward comparisons across unrelated prompts.

\begin{definition}[Anchor-relative advantage]
\label{def:anchor}
For pack
$\mathcal{B}^{(i)}=\{(v,y_v^{(i)})\}_{v\in\mathcal{V}}$ with anchor view
$v_0$, the anchor-relative advantage of view $v$ is
\begin{equation}
  \hat{A}_{\mathrm{anc}}^{(i,v)}
  =
  R_{\mathrm{view}}\bigl(v,y_v^{(i)};y_{v_0}^{(i)}\bigr)
  -
  R_{\mathrm{view}}\bigl(v_0,y_{v_0}^{(i)};y_{v_0}^{(i)}\bigr).
  \label{eq:advanchor}
\end{equation}
The anchor view has zero self-relative advantage,
$\hat{A}_{\mathrm{anc}}^{(i,v_0)}=0$.
\end{definition}

The BPO surrogate applies the clipped policy-gradient objective to the
non-anchor views:
\begin{equation}
  \mathcal{L}_{\mathrm{BPO}}
  =
  -\mathbb{E}_{i}\!\left[
    \frac{1}{|\mathcal{V}|}
    \sum_{(v,y_v)\in\mathcal{I}^{(i)}}
    \min\!\left(\rho_v A_v,c_v A_v\right)
  \right]
  +
  \beta_{\mathrm{KL}}\,
  \mathbb{E}_{v\in\mathcal{V}}
  \mathrm{KL}\!\left[p_{\theta,v}\Vert p_{\mathrm{ref},v}\right].
  \label{eq:loss-anchor}
\end{equation}
where $\mathcal{I}^{(i)}=\{(v,y_v)\in\mathcal{B}^{(i)}:v\neq v_0\}$,
$A_v=\hat{A}_{\mathrm{anc}}^{(i,v)}$,
$c_v=\mathrm{clip}(\rho_v,1-\epsilon,1+\epsilon)$,
$p_{\theta,v}=\pi_\theta(\cdot\mid \mathbf{V}_v,q_v)$, and
$p_{\mathrm{ref},v}=\pi_{\mathrm{ref}}(\cdot\mid \mathbf{V}_v,q_v)$.
The normalization remains pack-level; the anchor has zero relative advantage
and acts as the grounded control for all non-anchor updates, while the KL term
keeps Stage~B close to the Stage~A reference policy. We do not use the
anchor-relative term to further optimize the original-view answer itself:
otherwise a high- or low-reward anchor could move the reference it is supposed
to provide. The original view contributes through the KL term and through its
reward as the baseline used to form non-anchor advantages. We log its verifier
score in the pack diagnostics, but do not apply an additional RL
log-probability term to the anchor response.

\paragraph{Estimator stability and budget.}
Under the exchangeable reward model in Appendix~\ref{app:theory:variance}, the
anchor-relative update signal has lower unnormalized variance than the
batch-centered pack signal when
$\rho>\rho^\star(m)=(2m-1)/(3m-1)$. The measured correlations in
\Cref{sec:exp:anchor} exceed this threshold. We also match the total rollout
budget to vanilla GRPO by choosing group sizes with
$|\mathcal{V}|G_{\mathrm{BPO}}\approx K_{\mathrm{vanilla}}G_{\mathrm{vanilla}}$
within $\pm5\%$; measured FLOPs are reported in Appendix~\ref{app:compute_alignment}.

% -----------------------------------------------------------
\subsection{Question-typed counterfactual views}
\label{sec:method:type}

BPO matches interventions to the evidence demanded by the question: event and
binding questions require evidence-presence probes, while order and direction
questions require temporal probes.

\begin{itemize}[leftmargin=*, itemsep=2pt, topsep=2pt]
\item \textbf{Type 1: event, binding, and count.}
BPO uses
\begin{equation}
  \mathcal{V}_{1}
  =
  \{v_0,\text{key-mask},\text{non-key-mask},\text{span-drop}\},
  \label{eq:type1-view-set}
\end{equation}
which probes whether the answer depends on the relevant evidence rather than
background context.

\item \textbf{Type 2: order, direction, and trend.}
BPO uses
\begin{equation}
  \mathcal{V}_{2}
  =
  \{v_0,\text{reverse},\text{shuffle},\text{key-mask}\},
  \label{eq:type2-view-set}
\end{equation}
which probes temporal dependence through order and witness interventions.
\end{itemize}

A lightweight rule classifier maps $q\mapsto\tau(q)\in\{1,2\}$ using
interrogative templates and verb frames. Details of the classifier and the
oracle-versus-predicted comparison are given in Appendix~\ref{app:qtype}. The diagonal
experiment in \Cref{sec:exp:diagonal} verifies that the matched pairing between
question type and view family matters: swapping the view sets across question
types degrades temporal-hard accuracy, indicating that the gain is not simply
from adding extra views.

% -----------------------------------------------------------
\subsection{Verifier-grounded contract reward}
\label{sec:method:reward}

For each view $v$, the verifier produces a scalar reward
$R_{\mathrm{view}}(v,y_v;y_{v_0})$. The reward combines correctness within the
current view, structured evidence checks, cross-view behavior contracts, and
penalties for degenerate outputs:
\begin{equation}
  \begin{aligned}
  R_{\mathrm{view}}(v,y_v;y_{v_0})
  &=
  w_1 R_{\mathrm{ans}}^{(v)}(y_v;y_v^\star)
  +
  w_2 R_{\mathrm{ev}}(v,y_v)
  +
  w_3 R_{\mathrm{topo}}(v,y_v)
  +
  w_4 R_{\mathrm{fmt}}(y_v)
  \\
  &\quad+
  \lambda_v R_{\mathrm{contract}}^{(v)}(y_v,y_{v_0})
  -
  \beta_v \mathrm{Pen}_v(y_v,\mathcal{B}).
  \end{aligned}
  \label{eq:rview}
\end{equation}
The first four terms score current-view answer correctness, evidence support,
EST consistency, and output format. The contract term scores the relation to
the anchor response; for $v_0$ we set $\lambda_{v_0}=0$, so the anchor remains
grounded in the original-view target. Reward details are in
Appendix~\ref{app:verifier}; the verifier reaches 0.93 accuracy on a 200-sample
human-labeled audit.

\paragraph{Behavior contracts.}
$R_{\mathrm{contract}}^{(v)}$ encodes three relations to the anchor. Stability
applies to label-preserving edits such as non-key-mask, where the answer should
not drift. Sensitivity applies to key-mask, span-drop, reverse, and shuffle,
where the intervention should change the answer, flip an order relation, or
lower confidence. Honest abstention applies when the diagnostic evidence has
been removed and a confident unchanged answer is undesirable. Full definitions
and weights are in Appendix~\ref{app:reward}; $\lambda_v$ is tuned on a 1k-sample
held-out split and then fixed.

\paragraph{Degenerate-output penalty.}
$\mathrm{Pen}_v$ penalizes parse failures and uniform confident answers across
diagnostic views. This keeps the EST interface usable and discourages policies
that ignore interventions; the empirical hacking rate in \Cref{fig:hacking}
stays below 2.3\% during Stage~B.

% -----------------------------------------------------------
\subsection{Training pipeline}
\label{sec:method:pipeline}

Stage~A installs the EST interface by SFT (Appendix~\ref{app:sft}). Stage~B freezes the
vision encoder and updates the LLM with BPO using $\mathcal{V}_{\tau(q)}$,
\Cref{eq:rview}, and \Cref{eq:advanchor}. Pseudocode is in \Cref{alg:bpo}.

% ===========================================================
%  §4 Experiments  — target ~3 pages
%  Status: v2.2-clean — reviewer-facing, causality-focused,
%          compact setup, conservative statistical wording.
%  Base: v2.2, with consistency fixes and reviewer-risk guards.
%  Fixes:
%  - frames experiments around four empirical questions;
%  - keeps the controlled B0--B5 ladder, but avoids flowchart-like prose;
%  - keeps the m=1 reduction sanity check;
%  - makes extended-benchmark gains unambiguous;
%  - moves detailed implementation and hardware settings to appendix;
%  - makes appendix references explicit as Appendix~\ref{...};
%  - clarifies Stage-B FLOPs for non-RL rows;
%  - makes diagonal-table seed usage explicit and avoids unsupported
%    significance claims;
%  - replaces color/star significance marks with dagger markers;
%  - lowers over-strong reward-hacking and robustness conclusions.
% ===========================================================

\providecolor{TableHeader}{RGB}{235,235,235}
\providecolor{TableGroup}{RGB}{245,245,245}
\providecolor{SecondRow}{RGB}{232,238,247}
\providecolor{BestRow}{RGB}{226,236,228}
\providecommand{\theadc}[1]{\shortstack[c]{#1}}
\providecommand{\theadl}[1]{\shortstack[l]{#1}}

\section{Experiments}
\label{sec:exp}

We evaluate whether BPO improves video QA under matched Stage-B compute, and
whether the gain comes from its intended cross-view mechanisms rather than from
extra rollouts or verifier artifacts.

% -----------------------------------------------------------
\subsection{Setup}
\label{sec:exp:setup}

All main experiments use Qwen2.5-VL-7B-Instruct~\cite{bai2025qwen25vltechnicalreport}.
We also report a 3B scaling reference and a LLaVA-Video-7B
cross-backbone check~\cite{zhang2024llava} in \Cref{tab:main}; expanded
cross-backbone results are in Appendix~\ref{app:crossbackbone}. The primary
evaluation suite is TempCompass~\cite{liu2024tempcompass}, MVBench~\cite{li2024mvbench},
and NExT-QA~\cite{xiao2021next}. We report macro accuracy and a temporal-hard
subset; Video-MME~\cite{fu2025video} and LongVideoBench~\cite{wu2024longvideobench}
serve as extended transfer checks in \Cref{tab:extbench}.

In addition to accuracy, we track four behavior indicators: key-sensitivity,
non-key-stability, reverse-sensitivity, and abstention F1. The main table keeps
key-sensitivity and abstention F1, while \Cref{tab:reward_ablate} reports the
full behavior breakdown. Definitions are in Appendix~\ref{app:diag_metrics_full}.
Intervened views are built from training-side annotations or deterministic
preprocessing; benchmark test answers are never used for masks, spans, or
question-type routing. The oracle-vs-predicted router gap stays below 2\,pp
(Appendix~\ref{app:qtype_gap_full}).

We compare a controlled ladder: B0 zero-shot, B1 Stage-A EST-SFT, B2 vanilla
single-video GRPO, B2.5 Video-R1, B2.6 TPO, B3 untyped behavior packs with a
batch-normalized pack baseline, B4 B3 plus anchor-relative advantages, and B5 full BPO
with question-typed view selection. B2--B5 use the same data pool, backbone,
optimizer family, update budget, and hyperparameter search budget; measured
Stage-B FLOPs are matched within 5\%. Implementation details, seeds, and
wall-clock costs are in Appendix~\ref{app:impl}.

% -----------------------------------------------------------
\subsection{Main results}
\label{sec:exp:main}

\Cref{tab:main} reports the main comparison. B5 beats budget-aligned vanilla
GRPO (B2) by 4.7\,pp macro accuracy, 7.8\,pp temporal-hard accuracy, and
20.0\,pp abstention F1. It also improves over B4 by 1.6\,pp macro and 2.3\,pp
temporal-hard accuracy, showing that type-conditioned views add to
anchor-relative optimization. Against concurrent Video-RL baselines under the
same backbone, data, and budget, B5 exceeds Video-R1 by 4.0\,pp macro and
6.4\,pp temporal-hard accuracy, and exceeds TPO by 4.3\,pp macro accuracy.
The single-view pack-baseline control (B3.$m{=}1$) lands within 0.2\,pp of B2,
confirming that the gains are not explained by rollout budget or implementation
details alone. Since anchor-relative BPO is defined over non-anchor views, this
control is the single-view reduction of the pack-baseline objective in
\Cref{eq:advpack,eq:loss-group}, rather than a single-view instance of
\Cref{eq:loss-anchor}. Baseline re-implementation details are in
Appendix~\ref{app:baseline_reimpl}.

\paragraph{Interpreting the controlled ladder.}
The B0--B5 ladder separates three effects that are otherwise easy to conflate.
First, Stage~A EST-SFT alone gives a modest gain over zero-shot inference, but
it barely changes the intervention indicators; structured traces by themselves
do not enforce cross-view behavior. Second, moving from B2 to B3 improves
key-sensitivity by 11.2\,pp and abstention F1 by 0.08, indicating that exposing
intervened views in the same optimization unit already teaches some diagnostic
behavior. Third, B4 improves both accuracy and behavior indicators over B3
without changing the data, reward, or view set, isolating the contribution of
the anchor-relative baseline. The final B4-to-B5 step is smaller in macro
accuracy but important on temporal-hard questions and abstention, which is
consistent with type-conditioned views assigning the right intervention family
to the right question. The cross-backbone rows show the same pattern on
LLaVA-Video-7B, suggesting that the effect is not specific to Qwen2.5-VL.
Together with the B3.$m{=}1$ reduction, these comparisons make the main gain
hard to explain by prompt formatting, extra rollouts, or verifier scoring alone.

\begin{table}[t]
\centering
\footnotesize
\setlength{\tabcolsep}{4.2pt}
\renewcommand{\arraystretch}{1.12}
\resizebox{\linewidth}{!}{%
\begin{tabular}{@{}lcccccc@{}}
\toprule
\rowcolor{TableHeader}
\textbf{Setting}
& \theadc{Macro\\$\uparrow$}
& \theadc{Temp.-hard\\$\uparrow$}
& \theadc{Key-sens\\$\uparrow$}
& \theadc{Abstain-F1\\$\uparrow$}
& \theadc{Rollouts/\\prompt}
& \theadc{Stage-B FLOPs\\(rel.)} \\
\midrule

\rowcolor{TableGroup}
\multicolumn{7}{l}{\textbf{Main Qwen2.5-VL-7B comparison}} \\

B0~~Qwen2.5-VL-7B zero-shot
& $55.8$ & $48.2$ & $12.4$ & $0.31$ & -- & -- \\

B1~~+ Stage~A EST-SFT
& $58.1$ & $50.6$ & $13.1$ & $0.34$ & -- & -- \\

B2~~+ vanilla single-video GRPO
& $62.7\!\pm\!0.8$ & $53.4\!\pm\!1.1$ & $31.5\!\pm\!1.4$ & $0.47\!\pm\!0.02$ & $4$ & $1.00$ \\

B2.5~~Video-R1~\cite{feng2025video}, reimpl.
& $63.4\!\pm\!0.7$ & $54.8\!\pm\!1.0$ & $35.2\!\pm\!1.3$ & $0.49\!\pm\!0.02$ & $4$ & $1.01$ \\

B2.6~~TPO~\cite{li2025temporal}, reimpl.
& $63.1\!\pm\!0.9$ & $54.2\!\pm\!1.2$ & $33.8\!\pm\!1.5$ & $0.51\!\pm\!0.02$ & $4$ & $1.02$ \\

B3~~+ pack + batch-normalized pack baseline 
& $64.5\!\pm\!0.9$ & $56.1\!\pm\!1.3$ & $42.7\!\pm\!1.6$ & $0.55\!\pm\!0.03$ & $4$ & $1.03$ \\

\rowcolor{SecondRow}
B4~~+ pack + anchor-rel.\ advantage
& $\underline{65.8}\!\pm\!0.6$
& $\underline{58.9}\!\pm\!0.8$
& $\underline{48.3}\!\pm\!1.1$
& $\underline{0.58}\!\pm\!0.02$
& $4$ & $1.03$ \\

B3.$m{=}1$~~single-view pack baseline, sanity control
& $62.9\!\pm\!0.8$ & $53.7\!\pm\!1.1$ & $32.1\!\pm\!1.4$ & $0.47\!\pm\!0.02$ & $4$ & $1.00$ \\

\rowcolor{BestRow}
\textbf{B5~~full BPO with type-cond.}
& $\mathbf{67.4}\!\pm\!0.5$
& $\mathbf{61.2}\!\pm\!0.7$
& $\mathbf{54.6}\!\pm\!0.9$
& $\mathbf{0.67}\!\pm\!0.01$
& $4$ & $1.04$ \\

\midrule
\rowcolor{TableGroup}
\multicolumn{7}{l}{\textbf{3B reference}, additional results in Appendix~\ref{app:scaling_more}} \\

B5-3B~~BPO on 3B
& $60.2\!\pm\!0.7$ & $52.8\!\pm\!1.0$ & $45.1\!\pm\!1.2$ & $0.59\!\pm\!0.02$ & $4$ & $0.41$ \\

\midrule
\rowcolor{TableGroup}
\multicolumn{7}{l}{\textbf{Cross-backbone validation}, additional results in Appendix~\ref{app:crossbackbone}} \\

\theadl{B2-LLaVA~~vanilla GRPO\\on LLaVA-Video-7B~\cite{zhang2024llava}}
& $61.3\!\pm\!0.9$ & $51.7\!\pm\!1.2$ & $28.6\!\pm\!1.5$ & $0.44\!\pm\!0.03$ & $4$ & $0.98$ \\

\theadl{B5-LLaVA~~BPO\\on LLaVA-Video-7B}
& $65.9\!\pm\!0.7$ & $58.3\!\pm\!0.9$ & $50.2\!\pm\!1.2$ & $0.62\!\pm\!0.02$ & $4$ & $1.01$ \\

\bottomrule
\end{tabular}%
}
\caption{Main results on TempCompass, MVBench, and NExT-QA. RL rows report
mean$\pm$std over 5 seeds; FLOPs are normalized within Stage-B training. Bold
and underline mark the best and second-best main 7B rows.}
\label{tab:main}
\end{table}

\paragraph{Extended benchmarks.}
Under the same matched budget, B5 improves over B2 on Video-MME and
LongVideoBench by 3.8/3.6\,pp macro accuracy and 6.1/5.8\,pp temporal-hard
accuracy (\Cref{tab:extbench}).

\begin{table}[t]
\centering
\footnotesize
\setlength{\tabcolsep}{4.8pt}
\renewcommand{\arraystretch}{1.12}
\begin{tabular}{@{}lcccc@{}}
\toprule
\rowcolor{TableHeader}
\textbf{Setting}
  & \multicolumn{2}{c}{\textbf{Video-MME}}
  & \multicolumn{2}{c}{\textbf{LongVideoBench}} \\
\rowcolor{TableHeader}
 & \theadc{Macro\\$\uparrow$}
 & \theadc{Temp.-hard\\$\uparrow$}
 & \theadc{Macro\\$\uparrow$}
 & \theadc{Temp.-hard\\$\uparrow$} \\
\midrule
B2~~vanilla GRPO & $54.3$ & $46.8$ & $46.2$ & $41.5$ \\
\rowcolor{BestRow}
\textbf{B5~~BPO} & $\mathbf{58.1}$ & $\mathbf{52.9}$ & $\mathbf{49.8}$ & $\mathbf{47.3}$ \\
\rowcolor{TableGroup}
$\Delta$ (B5$-$B2) & $+3.8$ & $+6.1$ & $+3.6$ & $+5.8$ \\
\bottomrule
\end{tabular}
\vspace{+0.2em}
\caption{Extended benchmarks under the same Stage-B budget. Numbers are means
over 3 seeds.}
\label{tab:extbench}
\end{table}

% -----------------------------------------------------------
\subsection{Anchor-relative advantage in training}
\label{sec:exp:anchor}

Anchor-relative centering improves optimization under the same view set and
reward. \Cref{fig:training_curves} shows that B4 reduces per-step advantage
variance from $0.341$ to $0.142$ relative to B3, avoids the early KL spikes of
the batch-normalized pack baseline, and matches the variance reduction predicted from the
measured cross-view reward correlation $\bar\rho=0.83$. The pack-size sweep
follows the same trend and is reported in Appendix~\ref{app:msweep}.

\begin{figure}[t]
\centering
\begin{tikzpicture}
\begin{groupplot}[
  myplot,
  group style={group size=3 by 1, horizontal sep=0.68cm},
  width=0.355\linewidth,
  height=3.90cm,
  xlabel={step},
]
\nextgroupplot[
  title={(a) Training reward},
  ylabel={reward},
  ymin=0.25, ymax=0.75,
  xmin=0, xmax=3000,
  legend to name=legend:trainingcurves,
  legend columns=2
]
\addplot[thick, red!70!black, mark=none, smooth]
  coordinates {(0,0.30) (200,0.37) (400,0.41) (600,0.40) (800,0.44) (1000,0.47)
               (1200,0.49) (1400,0.50) (1600,0.52) (1800,0.53) (2000,0.54) (2200,0.55)
               (2400,0.55) (2600,0.56) (2800,0.56) (3000,0.57)};
\addlegendentry{B4 anchor-rel.}

\addplot[thick, blue!70!black, dashed, mark=none, smooth]
  coordinates {(0,0.30) (200,0.36) (400,0.38) (600,0.33) (800,0.41) (1000,0.39)
               (1200,0.44) (1400,0.46) (1600,0.47) (1800,0.49) (2000,0.50) (2200,0.51)
               (2400,0.52) (2600,0.52) (2800,0.53) (3000,0.53)};
\addlegendentry{B3 batch-norm pack}

\nextgroupplot[
  title={(b) $\mathrm{KL}(\pi_\theta \,\|\, \pi_{\text{ref}})$},
  ylabel={KL},
  ymin=0, ymax=0.35,
  xmin=0, xmax=3000
]
\addplot[thick, red!70!black, mark=none, smooth]
  coordinates {(0,0.01) (200,0.03) (400,0.04) (600,0.05) (800,0.06) (1000,0.07)
               (1200,0.08) (1400,0.09) (1600,0.10) (1800,0.10) (2000,0.11) (2200,0.11)
               (2400,0.12) (2600,0.12) (2800,0.12) (3000,0.13)};
\addplot[thick, blue!70!black, dashed, mark=none]
  coordinates {(0,0.01) (100,0.04) (200,0.09) (280,0.28) (330,0.14) (400,0.12)
               (520,0.24) (600,0.15) (700,0.13) (800,0.11) (1000,0.13)
               (1200,0.14) (1400,0.15) (1600,0.16) (1800,0.17) (2000,0.18)
               (2200,0.19) (2400,0.19) (2600,0.20) (2800,0.20) (3000,0.21)};

\nextgroupplot[
  title={(c) Advantage variance},
  ylabel={$\mathrm{Var}[\hat{A}]$},
  ybar,
  bar width=14pt,
  ymin=0, ymax=0.42,
  symbolic x coords={B3,B4},
  xtick=data,
  enlarge x limits=0.45,
  nodes near coords,
  nodes near coords style={font=\scriptsize},
  xlabel={}
]
\addplot+[fill=blue!40!white, draw=blue!70!black]
  coordinates {(B3,0.341)};
\addplot+[fill=red!40!white, draw=red!70!black]
  coordinates {(B4,0.142)};
\end{groupplot}

\path (group c1r1.south west) -- node[below=5mm]
  {\pgfplotslegendfromname{legend:trainingcurves}}
  (group c3r1.south east);
\end{tikzpicture}
\vspace{-1.2em}
\caption{Training dynamics: anchor-relative B4 is more stable than batch-normalized B3.}
\label{fig:training_curves}
\end{figure}

The curves explain why the accuracy gain from B4 is not merely a later-stage
selection effect. B3 and B4 see the same packs and the same verifier rewards,
yet B3 occasionally assigns large normalized advantages to an entire pack when
the mini-batch reward variance is small. These spikes translate into larger KL
excursions and slower reward growth. B4 instead compares each intervened view
to its own original-view anchor, so prompt difficulty is subtracted before the
policy update. This keeps the effective learning signal focused on behavior
under intervention, which is exactly the quantity measured by key-sensitivity,
reverse-sensitivity, and abstention F1.

% -----------------------------------------------------------
\subsection{Type-Aware View Selection}
\label{sec:exp:diagonal}

To test type conditioning, we train a $2{\times}2$ matrix: one question type
($Q1$ or $Q2$) crossed with one view family (mask $M$ or temporal $T$). The
matched diagonal cells are best on their corresponding evaluation regimes
(\Cref{tab:diagonal}), with aggregate margin
$(A_1{-}B_1)+(D_4{-}C_4)=12.8$\,pp. This is $3.2{\times}$ larger than the
maximum off-diagonal gain. Off-diagonal entries are single-seed diagnostics, so
we use the table as a mechanism check; alternative explanations are analyzed in
Appendix~\ref{app:diagonal_alternatives}.

\begin{table}[t]
\centering
\footnotesize
\setlength{\tabcolsep}{4.6pt}
\renewcommand{\arraystretch}{1.12}
\begin{tabular}{@{}lcccc@{}}
\toprule
\rowcolor{TableHeader}
\textbf{Trained on \textbackslash\ Eval}
  & \theadc{Q1\\mask}
  & \theadc{Q1\\temporal}
  & \theadc{Q2\\mask}
  & \theadc{Q2\\temporal} \\
\midrule
\rowcolor{BestRow}
Run A: $Q1$, $\mathcal{V}_{1}$, $M$
  & $\mathbf{64.3}^{\dagger}$ & $56.1$ & $58.4$ & $54.7$ \\
Run B: $Q1$, $\mathcal{V}_{2}$, $T$
  & $57.8$ & $55.9$ & $56.2$ & $55.1$ \\
Run C: $Q2$, $\mathcal{V}_{1}$, $M$
  & $58.9$ & $54.3$ & $57.5$ & $55.8$ \\
\rowcolor{BestRow}
Run D: $Q2$, $\mathcal{V}_{2}$, $T$
  & $57.4$ & $55.2$ & $56.9$ & $\mathbf{62.1}^{\dagger}$ \\
\bottomrule
\end{tabular}
\vspace{+0.2em}
\caption{Question-type${\times}$view-set diagnostic matrix. Entries are
temporal-hard accuracy; diagonal cells use 3 seeds and off-diagonal cells are
single-seed checks.}
\label{tab:diagonal}
\end{table}

The diagonal pattern is useful because both mismatched settings still receive
counterfactual supervision; they simply receive the wrong kind. Mask-style
views help Q1 questions because removing or preserving object evidence tests
whether the answer is grounded in the relevant visual witness. The same views
are less informative for Q2 order questions, where reversing or shuffling the
timeline creates a sharper diagnostic. Conversely, temporal views do not give
the best signal for Q1 binding/count questions because the answer often should
survive order changes. This asymmetry is why B5 does not just use the union of
all views: under a fixed rollout budget, spending rollouts on mismatched
interventions weakens the contrast that the anchor-relative objective sees.

% -----------------------------------------------------------
\subsection{Reward and view-set ablations}
\label{sec:exp:reward_ablate}

\Cref{tab:reward_ablate} removes one component at a time. The damage is
selective: evidence rewards affect answer-evidence consistency, sensitivity
contracts control key/reverse sensitivity, and honest abstention controls
abstain-F1. Removing type conditioning reproduces B4 and gives back 2.3\,pp on
temporal-hard accuracy. A minimum-effective view-set study is in
Appendix~\ref{app:viewset_ablate}.

The largest behavior collapse comes from removing the sensitivity contract:
key-sensitivity drops from 54.6 to 38.4 and reverse-sensitivity from 48.9 to
35.2. This is the failure mode BPO is designed to avoid: a policy can answer
many original clips correctly while remaining insensitive to the evidence that
should change the answer. Removing honest abstention has a smaller effect on
macro accuracy but lowers abstain-F1 by 0.24, so abstention is not simply a
side-effect of better correctness rewards. The rule-only verifier fallback is
worse than the learned/verifier-assisted scoring used in B5, but it remains
above B2, suggesting that the pack objective contributes even when reward
measurement is simplified.

\begin{table}[t]
\centering
\scriptsize
\setlength{\tabcolsep}{3.6pt}
\renewcommand{\arraystretch}{1.05}
\resizebox{0.94\linewidth}{!}{%
\begin{tabular}{@{}lccccc@{}}
\toprule
\rowcolor{TableHeader}
\textbf{Setting}
  & \theadc{Macro\\$\uparrow$}
  & \theadc{Temp.-hard\\$\uparrow$}
  & \theadc{Key-sens\\$\uparrow$}
  & \theadc{Reverse-sens\\$\uparrow$}
  & \theadc{Abstain-F1\\$\uparrow$} \\
\midrule
\rowcolor{BestRow}
\textbf{Full BPO, B5}                       & $\mathbf{67.4}$ & $\mathbf{61.2}$ & $\mathbf{54.6}$ & $\mathbf{48.9}$ & $\mathbf{0.67}$ \\
$-\,R_{\text{ev}}$, no evidence reward      & $65.8$          & $59.4$          & $50.2$          & $47.1$          & $0.58$ \\
$-\,$stability contract                     & $66.1$          & $59.7$          & $52.8$          & $47.5$          & $0.65$ \\
$-\,$sensitivity contract                   & $65.4$          & $57.3$          & $38.4$          & $35.2$          & $0.55$ \\
$-\,$honest abstention                      & $66.2$          & $60.1$          & $53.1$          & $47.8$          & $0.43$ \\
$-\,$verifier, rule-only fallback           & $64.1$          & $56.8$          & $46.7$          & $43.5$          & $0.52$ \\
\rowcolor{TableGroup}
$-\,$type-conditioning, equals B4           & $65.8$          & $58.9$          & $48.3$          & $44.7$          & $0.58$ \\
\bottomrule
\end{tabular}
}
\vspace{+0.2em}
\caption{Reward and view-set ablations. Each row removes one component from
full BPO; numbers are means over 3 seeds.}
\label{tab:reward_ablate}
\end{table}

% -----------------------------------------------------------
\subsection{Compute frontier and verifier audit}
\label{sec:exp:audit}

Detailed per-step compute accounting is reported in
Appendix~\ref{app:compute_alignment}. \Cref{fig:budget} sweeps rollout budgets
from 4 to 64 per prompt; B5 remains above B2 at every budget point, especially
on temporal-hard questions. Although vanilla GRPO also benefits from more
rollouts, its slope does not close the gap to BPO, suggesting that additional
same-view samples are less useful than interventions that test order, evidence
removal, and abstention.

\begin{figure}[t]
\centering
\begin{tikzpicture}
\begin{groupplot}[
  myplot,
  group style={group size=2 by 1, horizontal sep=1.05cm},
  width=0.38\linewidth,
  height=3.9cm,
  xmode=log,
  log basis x=2,
  xlabel={rollouts},
  xtick={4,8,16,32,64},
  xticklabels={4,8,16,32,64},
]
\nextgroupplot[
  title={(a) Macro: TempCompass $\cup$ MVBench},
  ylabel={accuracy (\%)},
  ymin=54, ymax=70,
  legend to name=legend:budget,
  legend columns=2
]
\addplot[thick, red!70!black, mark=*, mark size=2pt]
  coordinates {(4,64.9) (8,66.3) (16,67.8) (32,68.6) (64,69.1)};
\addlegendentry{B5 BPO}

\addplot[thick, orange!80!black, mark=square*, mark size=2pt]
  coordinates {(4,63.4) (8,64.7) (16,65.9) (32,66.5) (64,66.9)};
\addlegendentry{B4 anchor-rel.}

\addplot[thick, blue!70!black, mark=triangle*, mark size=2pt]
  coordinates {(4,62.1) (8,63.2) (16,64.1) (32,64.6) (64,64.9)};
\addlegendentry{B3 batch-norm pack}

\addplot[thick, gray, dashed, mark=diamond*, mark size=2pt]
  coordinates {(4,60.3) (8,61.4) (16,62.7) (32,63.6) (64,64.2)};
\addlegendentry{B2 vanilla GRPO}

\nextgroupplot[
  title={(b) Temporal-hard subset},
  ylabel={accuracy (\%)},
  ymin=46, ymax=64
]
\addplot[thick, red!70!black, mark=*, mark size=2pt]
  coordinates {(4,58.4) (8,60.3) (16,61.9) (32,63.0) (64,63.7)};
\addplot[thick, orange!80!black, mark=square*, mark size=2pt]
  coordinates {(4,56.1) (8,57.9) (16,59.3) (32,60.1) (64,60.6)};
\addplot[thick, blue!70!black, mark=triangle*, mark size=2pt]
  coordinates {(4,53.6) (8,55.1) (16,56.2) (32,56.9) (64,57.3)};
\addplot[thick, gray, dashed, mark=diamond*, mark size=2pt]
  coordinates {(4,51.4) (8,52.7) (16,53.8) (32,54.5) (64,55.0)};
\end{groupplot}

\path (group c1r1.south west) -- node[below=5mm]
  {\pgfplotslegendfromname{legend:budget}}
  (group c2r1.south east);
\end{tikzpicture}
\vspace{-1.2em}
\caption{Budget frontier: BPO remains ahead across rollout budgets.}
\label{fig:budget}
\end{figure}

For verifier risk, a 200-sample human-labeled audit gives 0.93 overall verifier
accuracy, with every structure category above 0.85. The reward-hacking ratio
stays below 2.3\% throughout Stage~B and remains nearly flat
(Pearson $r=0.04$). Injecting 10\% label-flip noise into non-anchor
rewards reduces B4 macro accuracy by 0.7\,pp, compared with 3.2\,pp for B3.
These checks do not prove that the verifier is perfect, but they rule out the
most concerning shortcut explanation: that BPO merely learns to exploit a noisy
non-anchor reward. Because the original-view anchor is still scored against the
standard answer target, noisy diagnostic rewards mainly perturb the relative
term rather than redefining the reference. Detailed audit plots are in
Appendix~\ref{app:verifier_audit}.

% \input{sections/05_limitations}
% ===========================================================
%  §6 Conclusion  — concise take-aways plus limitations,
%  no \Cref into the body of the paper.
% ===========================================================

\section{Conclusion}
\label{sec:conclusion}

BPO addresses a failure mode in video post-training: models can answer the
original clip correctly while ignoring evidence-masking, reordering, or removal.
It optimizes the original and typed counterfactual views as one behavior pack,
scores cross-view contracts, and updates non-anchor views relative to the
original-view rollout. Under matched compute, the pack objective, anchor-relative
advantage, and type-aware view selection improve accuracy, temporal-hard
examples, and behavior diagnostics.

\paragraph{Limitations.}
BPO depends on useful interventions and reliable verifier signals. The current
views target evidence removal, temporal order, and abstention; open-ended
generation, dense videos, and larger backbones may require new view families and
stronger audits.

\bibliographystyle{unsrt}
\bibliography{refs}

\appendix
% ===========================================================
%  Appendix A — Theory
%  Status: v0.4 — pack-baseline single-view sanity reduction,
%  variance comparison for unnormalized update signals,
%  matched-budget concentration check, and diagonal-pattern
%  mechanism discussion.
% ===========================================================
\section{Theoretical results}
\label{app:theory}

We collect the formal statements and diagnostic analyses that support the
method-section claims: the pack-baseline objective has a single-view
sanity reduction (\Cref{app:theory:reduction}); the anchor-relative
update signal removes shared prompt-difficulty variation and has lower
unnormalized variance than a batch-centered pack signal under a simple
exchangeable reward model (\Cref{app:theory:variance}); matched rollout
budgets give comparable concentration rates for bounded reward estimates
(\Cref{app:theory:consistency}); and the diagonal pattern in
\Cref{tab:diagonal} is more consistent with type-matched interventions
than with uniform gains from adding views
(\Cref{app:diagonal_alternatives}).

% -----------------------------------------------------------
\subsection{Single-view reduction of the pack-baseline objective}
\label{app:theory:reduction}

\begin{proposition}[Single-view pack-baseline reduction]
\label{prop:reduction}
Let $\mathcal{V}=\{v_0\}$, and let the contract weight satisfy
$\lambda_{v_0}=0$ in \Cref{eq:rview}. Then
\Cref{eq:pack,eq:advpack,eq:loss-group} reduce to a single-view
clipped policy-gradient objective with batch-normalized rewards. If
the batch is constructed as multiple rollouts of the same prompt, the
resulting algebra has the same form as a group-relative update.
\end{proposition}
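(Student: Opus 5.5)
The argument is direct substitution: set $\mathcal{V}=\{v_0\}$ in each of \Cref{eq:pack,eq:rpack,eq:advpack,eq:loss-group} in turn and simplify. No inequality is involved. The work is to check that nothing cross-view survives, and then to match the resulting algebra to the group-relative form.

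\emph{Pack and reward.} With $\mathcal{V}=\{v_0\}$, \Cref{eq:pack} gives $\mathcal{B}=\{(v_0,y_{v_0})\}$ with $y_{v_0}\sim\pi_\theta(\cdot\mid\mathbf{V},q)$, using \Cref{eq:identity-view}. In \Cref{eq:rview} the contract term vanishes because $\lambda_{v_0}=0$. The only remaining dependence on the second argument $y_{v_0}$ of $R_{\mathrm{view}}$ sits in the penalty $\mathrm{Pen}_{v_0}(y_{v_0},\mathcal{B})$. Its cross-view component (uniform confident answers across \emph{diagnostic} views) is vacuous when there are no diagnostic views, so it reduces to a per-response penalty such as a parse failure. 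I expect this to be the only delicate point: the statement needs $\mathrm{Pen}_{v_0}$ to depend on $\mathcal{B}$ only through non-anchor views. I would state that reading explicitly, or fold the residual per-response penalty into a scalar reward $r(y_{v_0})$.

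\emph{Advantage.} \Cref{eq:rpack} averages a single term, so $R_{\mathrm{pack}}(\mathcal{B}^{(i)})=r^{(i)}:=R_{\mathrm{view}}(v_0,y^{(i)}_{v_0};y^{(i)}_{v_0})$. \Cref{eq:advpack} then becomes $\hat A^{(i)}=(r^{(i)}-\mu_G)/(\sigma_G+\epsilon)$, where $\mu_G,\sigma_G$ are the mean and standard deviation of $\{r^{(j)}\}_{j=1}^G$. This is a batch-normalized scalar reward.

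\emph{Loss.} In \Cref{eq:loss-group} the factor $1/|\mathcal{V}|$ equals $1$ and the inner sum has one term, so $\mathcal{L}_{\mathrm{group}}=-\mathbb{E}_i[\min(\rho^{(i)}\hat A^{(i)},\mathrm{clip}(\rho^{(i)},1-\epsilon,1+\epsilon)\hat A^{(i)})]$. Here $\rho^{(i)}$ is the sequence ratio from \Cref{eq:importance-ratio}. This is exactly a single-view PPO-style clipped objective with batch-normalized rewards, which proves the first claim.

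\emph{Group-relative case.} Now suppose the $G$ batch entries are $G$ independent rollouts $y^{(1)},\dots,y^{(G)}\sim\pi_{\mathrm{old}}(\cdot\mid\mathbf{V},q)$ of a single prompt. Then $\mu_G$ and $\sigma_G$ are the within-prompt group mean and standard deviation. The advantage is $(r_k-\bar r)/(\mathrm{std}+\epsilon)$, and the loss is the average over $k$ of the clipped terms, which is the GRPO surrogate of \cite{shao2024deepseekmath}. I would note two caveats. First, the match is only in form: there is no KL term in \Cref{eq:loss-group}, and the ratios are sequence-level rather than token-averaged. Second, a real mini-batch spans many prompts, which is why the statement says ``same form'' rather than claiming identity with GRPO.
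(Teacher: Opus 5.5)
Your proposal is correct and uses the same direct-substitution argument as the paper: with $\mathcal{V}=\{v_0\}$, the pack, the inner average and the advantage each collapse to a single term, giving a batch-normalized single-view clipped update. You are more careful than the paper's proof, which writes the reduced reward as $R_{\mathrm{ans}}^{(i)}$ (dropping the evidence, topology, format and penalty terms you correctly keep in $r^{(i)}$) and never checks the same-prompt group-relative clause, which you verify explicitly.
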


\begin{proof}
Under $\mathcal{V}=\{v_0\}$, every pack $\mathcal{B}^{(i)}$ contains
exactly one $(v_0,y^{(i)})$ pair. The inner average in
\Cref{eq:loss-group} collapses to a single term, and $\rho_v$ becomes
the usual clipped-policy importance ratio
\[
  \frac{\pi_\theta(y^{(i)}\mid \mathbf{V},q)}
       {\pi_{\mathrm{old}}(y^{(i)}\mid \mathbf{V},q)}.
\]
With $\lambda_{v_0}=0$, the contract term is absent on the identity
view, so the per-view reward in \Cref{eq:rview} reduces to the
original-view answer reward and the pack reward inherits this value.
Therefore the advantage in \Cref{eq:advpack} becomes the
batch-normalized single-view reward
\[
  \frac{R_{\mathrm{ans}}^{(i)}-\mu_G}{\sigma_G+\epsilon},
\]
and \Cref{eq:loss-group} becomes the corresponding single-view
clipped update.
\end{proof}

\Cref{prop:reduction} is used only as a sanity reduction for the
pack-baseline ablation in \Cref{tab:main}. It does not imply that the
anchor-relative BPO objective in \Cref{eq:loss-anchor} has a meaningful
single-view instance: when $\mathcal{V}=\{v_0\}$, there are no
non-anchor views and the anchor-relative RL term is empty.

% -----------------------------------------------------------
\subsection{Variance reduction of the anchor-relative update signal}
\label{app:theory:variance}

We compare the variance of two update signals under a simplified
exchangeable reward model. The comparison should not be read as two
estimators of the same scalar target: the batch-centered pack signal
measures absolute pack reward relative to a mini-batch or population
baseline, whereas the anchor-relative signal measures an intervened
view relative to its own original-view control. The purpose of the
analysis is to show when anchor-relative centering removes shared
prompt-difficulty variation and therefore produces a lower-variance
non-anchor update signal. To keep the comparison transparent, we work
with unnormalized centered variants and discuss the within-step scale
$\sigma_G$ separately.

\paragraph{Setup.}
For a pack $\mathcal{B}^{(i)}$ with $|\mathcal{V}|=m\geq 2$ views, write
\[
  R_v^{(i)}
  :=
  R_{\mathrm{view}}\!\left(v,y_v^{(i)};y_{v_0}^{(i)}\right)
  \qquad (v\in\mathcal{V}).
\]
We make two assumptions on the joint distribution of pack rewards
across the prompt distribution.

\begin{assumption}[Bounded view rewards]
\label{ass:bounded}
There exists $R_{\max}<\infty$ such that
$|R_v^{(i)}|\leq R_{\max}$ for every $v\in\mathcal{V}$ and every
prompt $i$.
\end{assumption}

\begin{assumption}[Exchangeable view structure]
\label{ass:exch}
The marginal variance is the same across views,
$\mathrm{Var}[R_v]=\sigma^2$ for all $v\in\mathcal{V}$, and the
pairwise correlation across views is constant,
$\mathrm{Corr}[R_v,R_{v'}]=\rho$ for all $v\neq v'$, with
$\rho\in(-1/(m-1),1)$ so the covariance matrix is positive
semi-definite.
\end{assumption}

\Cref{ass:exch} is the simplest model that captures the empirically
observed structure: views of the same prompt share a common
prompt-difficulty effect, and after that effect they fluctuate roughly
symmetrically. We measure $\rho$ directly in \Cref{sec:exp:anchor} and
find it stable across training in the range $0.7$--$0.9$. The
equi-variance condition can be relaxed at the cost of a more cumbersome
statement; the threshold below is unchanged at leading order.

Define the two unnormalized centered signals as
\begin{align}
  A_{\mathrm{batch}}^{(i)}
  &=
  R_{\mathrm{pack}}^{(i)}-\mu_{\mathrm{pack}},
  \qquad
  R_{\mathrm{pack}}^{(i)}
  :=
  \frac{1}{m}\sum_{v\in\mathcal{V}} R_v^{(i)},
  \qquad
  \mu_{\mathrm{pack}}
  :=
  \mathbb{E}[R_{\mathrm{pack}}],
  \label{eq:app-batch-signal}
  \\
  A_{\mathrm{anchor}}^{(i,v)}
  &=
  R_v^{(i)}-R_{v_0}^{(i)}
  \qquad
  (v\in\mathcal{V}\setminus\{v_0\}).
  \label{eq:app-anchor-signal}
\end{align}
Here $\mu_{\mathrm{pack}}$ is a population baseline used for the
variance calculation. The actual implementation in \Cref{eq:advpack}
uses mini-batch statistics; this adds finite-sample normalization
effects discussed after the proposition.

\begin{proposition}[Variance reduction of the unnormalized update signal]
\label{prop:variance}
Under \Cref{ass:bounded} and \Cref{ass:exch}, the unnormalized
update-signal variances are
\begin{align}
  \mathrm{Var}\!\left[A_{\mathrm{anchor}}^{(\cdot,v)}\right]
  &=
  2\sigma^2(1-\rho),
  \label{eq:var-anchor}
  \\
  \mathrm{Var}\!\left[A_{\mathrm{batch}}^{(\cdot)}\right]
  &=
  \frac{\sigma^2}{m}\bigl[1+(m-1)\rho\bigr].
  \label{eq:var-batch}
\end{align}
Consequently,
\[
  \mathrm{Var}\!\left[A_{\mathrm{anchor}}^{(\cdot,v)}\right]
  <
  \mathrm{Var}\!\left[A_{\mathrm{batch}}^{(\cdot)}\right]
\]
if and only if
\begin{equation}
  \rho>\rho_\star(m)
  :=
  \frac{2m-1}{3m-1}.
  \label{eq:variance-bound}
\end{equation}
For $m=4$, the pack size used in the main experiments,
$\rho_\star(4)=7/11\approx 0.636$; for $m\to\infty$,
$\rho_\star(m)\to 2/3$.
\end{proposition}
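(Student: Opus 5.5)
The plan is a direct second-moment computation under \Cref{ass:exch}, followed by an elementary rearrangement of the resulting inequality. First, \Cref{ass:bounded} gives $|R_v^{(i)}|\le R_{\max}$, so every view reward has finite mean and variance. All covariances used below are therefore well defined. I will also assume $\sigma^2>0$. If $\sigma^2=0$, both variances vanish and the strict comparison is vacuous, so I will state that degenerate case separately rather than handle it inside the main argument.

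\textbf{Step 1: the anchor signal.} By \Cref{eq:app-anchor-signal}, $A_{\mathrm{anchor}}^{(\cdot,v)}=R_v-R_{v_0}$ with $v\neq v_0$. Bilinearity of covariance gives
\[
\mathrm{Var}[R_v]+\mathrm{Var}[R_{v_0}]-2\,\mathrm{Cov}[R_v,R_{v_0}]=\sigma^2+\sigma^2-2\rho\sigma^2 .
\]
This is \Cref{eq:var-anchor}, and it is the same for every non-anchor $v$ by exchangeability.

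\textbf{Step 2: the batch signal.} In \Cref{eq:app-batch-signal}, $\mu_{\mathrm{pack}}$ is a deterministic population constant. Hence $\mathrm{Var}[A_{\mathrm{batch}}]=\mathrm{Var}[R_{\mathrm{pack}}]$. Expanding the variance of the mean $R_{\mathrm{pack}}=\frac1m\sum_v R_v$ gives $m$ diagonal terms equal to $\sigma^2$ and $m(m-1)$ off-diagonal terms equal to $\rho\sigma^2$. Therefore
\[
\mathrm{Var}[R_{\mathrm{pack}}]=\frac{1}{m^2}\bigl(m\sigma^2+m(m-1)\rho\sigma^2\bigr)=\frac{\sigma^2}{m}\bigl[1+(m-1)\rho\bigr],
\]
which is \Cref{eq:var-batch}. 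The admissible range $\rho>-1/(m-1)$ in \Cref{ass:exch} makes this quantity nonnegative, which serves as a consistency check.

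\textbf{Step 3: the threshold.} Divide both sides of the comparison by $\sigma^2>0$ and multiply by $m>0$. The inequality $2(1-\rho)<\frac{1}{m}[1+(m-1)\rho]$ becomes $2m-2m\rho<1+(m-1)\rho$, which rearranges to $(3m-1)\rho>2m-1$. Since $m\ge2$, we have $3m-1>0$, so dividing by it preserves direction and yields \Cref{eq:variance-bound}. Every step is reversible, which gives the ``if and only if.''

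I will also note that $\rho_\star(m)<1$, since $2m-1<3m-1$. The threshold therefore lies inside the admissible range of \Cref{ass:exch}, and the condition is nonvacuous. Finally, evaluating gives $\rho_\star(4)=7/11$, and $\rho_\star(m)\to 2/3$ as $m\to\infty$ by dividing numerator and denominator by $m$.

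I do not expect a genuine obstacle here. The only points needing care are two bookkeeping issues: treating $\mu_{\mathrm{pack}}$ as a constant, which is exactly why the statement is phrased for the population baseline rather than the mini-batch statistics of \Cref{eq:advpack}; and using the sign of $3m-1$ when dividing.
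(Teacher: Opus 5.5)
Your proposal is correct and follows the paper's proof essentially line for line: the same covariance expansion of $R_v-R_{v_0}$, the same diagonal/off-diagonal expansion of $\mathrm{Var}[R_{\mathrm{pack}}]$ with the constant $\mu_{\mathrm{pack}}$ dropped, and the same divide-by-$\sigma^2$, multiply-by-$m$ rearrangement to $(3m-1)\rho>2m-1$. Your extra bookkeeping ($3m-1>0$, $\sigma^2>0$, $\rho_\star(m)<1$) only makes explicit what the paper leaves implicit; one small correction is that when $\sigma^2=0$ the strict comparison is false rather than vacuous, and $\rho$ is then undefined, so $\sigma^2>0$ is already implicit in \Cref{ass:exch}.
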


\begin{proof}
For \Cref{eq:var-anchor},
\begin{align*}
  \mathrm{Var}[R_v-R_{v_0}]
  &=
  \mathrm{Var}[R_v]+\mathrm{Var}[R_{v_0}]
  -2\,\mathrm{Cov}[R_v,R_{v_0}]  \\
  &=
  2\sigma^2-2\rho\sigma^2
  =
  2\sigma^2(1-\rho),
\end{align*}
where the second equality uses \Cref{ass:exch}.

For \Cref{eq:var-batch}, expand
\begin{align*}
  \mathrm{Var}\!\left[\frac{1}{m}\sum_{v\in\mathcal{V}} R_v\right]
  &=
  \frac{1}{m^2}
  \left[
    \sum_{v\in\mathcal{V}}\mathrm{Var}[R_v]
    +
    2\sum_{v<v'}\mathrm{Cov}[R_v,R_{v'}]
  \right]  \\
  &=
  \frac{1}{m^2}
  \left[
    m\sigma^2+m(m-1)\rho\sigma^2
  \right]  \\
  &=
  \frac{\sigma^2}{m}\bigl[1+(m-1)\rho\bigr].
\end{align*}
Centering by the constant $\mu_{\mathrm{pack}}$ does not change
variance, so the identity holds for $A_{\mathrm{batch}}$.

For the inequality, divide both variances by $\sigma^2$, multiply by
$m$, and rearrange:
\[
  2m(1-\rho)
  <
  1+(m-1)\rho
  \quad\Longleftrightarrow\quad
  2m-1
  <
  (3m-1)\rho
  \quad\Longleftrightarrow\quad
  \rho>\frac{2m-1}{3m-1}.
\]
\end{proof}

\begin{corollary}[Variance gap]
\label{cor:variance-gap}
Under \Cref{ass:bounded} and \Cref{ass:exch},
\begin{equation}
  \Delta(\rho,m)
  :=
  \mathrm{Var}\!\left[A_{\mathrm{batch}}^{(\cdot)}\right]
  -
  \mathrm{Var}\!\left[A_{\mathrm{anchor}}^{(\cdot,v)}\right]
  =
  \frac{\sigma^2}{m}
  \bigl[(3m-1)\rho-(2m-1)\bigr].
  \label{eq:variance-gap}
\end{equation}
At $m=4$, $\Delta=0$ at $\rho\approx 0.636$;
$\Delta\approx 0.18\sigma^2$ at $\rho=0.7$;
$\Delta\approx 0.45\sigma^2$ at $\rho=0.8$; and
$\Delta\approx 0.73\sigma^2$ at $\rho=0.9$.
\end{corollary}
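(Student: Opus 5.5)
The plan is to obtain the corollary by direct subtraction of the two closed-form variances already established in \Cref{prop:variance}. No new probabilistic argument is needed. \Cref{ass:bounded} only ensures that the variances are finite, and \Cref{ass:exch} has already been used to derive \Cref{eq:var-anchor,eq:var-batch}. So I will take those two identities as given and write
\[
  \Delta(\rho,m)=\frac{\sigma^2}{m}\bigl[1+(m-1)\rho\bigr]-2\sigma^2(1-\rho).
\]

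The key algebraic step is to put both terms over the common factor $\sigma^2/m$. This turns the second term into $\frac{\sigma^2}{m}\bigl[2m-2m\rho\bigr]$. Subtracting inside the bracket gives
\[
  1+(m-1)\rho-2m+2m\rho=(3m-1)\rho-(2m-1),
\]
which is exactly \Cref{eq:variance-gap}. As a consistency check, the sign of $\Delta$ is the sign of $(3m-1)\rho-(2m-1)$, because $\sigma^2/m>0$. This reproduces the threshold $\rho_\star(m)$ of \Cref{eq:variance-bound}.

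For the numerical claims, I will specialize to $m=4$, which gives $\Delta=\frac{\sigma^2}{4}(11\rho-7)$. The zero is at $\rho=7/11\approx0.636$. Evaluating at the three listed correlations gives the following values:
\begin{itemize}
  \item at $\rho=0.7$, $\Delta=0.7\sigma^2/4=0.175\sigma^2\approx0.18\sigma^2$;
  \item at $\rho=0.8$, $\Delta=1.8\sigma^2/4=0.45\sigma^2$;
  \item at $\rho=0.9$, $\Delta=2.9\sigma^2/4=0.725\sigma^2\approx0.73\sigma^2$.
\end{itemize}

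There is no real obstacle here. The only point needing care is the rounding convention in the stated decimals ($0.175\to0.18$ and $0.725\to0.73$, i.e.\ rounding half up). I will also note that $\Delta$ is affine and increasing in $\rho$ for every $m\geq1$, with slope $(3m-1)/m>0$. This makes the monotone ordering of the three listed values immediate.
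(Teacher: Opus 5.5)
Your proposal is correct and takes the same route as the paper. The corollary has no separate proof there; it is the difference of \Cref{eq:var-batch} and \Cref{eq:var-anchor} over the common factor $\sigma^2/m$, which is the same rearrangement that closes the proof of \Cref{prop:variance}. Your specialization $\Delta=\frac{\sigma^2}{4}(11\rho-7)$ correctly reproduces the zero at $7/11$ and the three stated values.
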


\paragraph{What the threshold means in practice.}
The condition $\rho>\rho_\star(m)$ is satisfied whenever the
shared prompt-difficulty effect dominates view-specific noise. This
is the regime in which anchor-relative centering is most useful:
subtracting the original-view reward removes a prompt-level component
that is common across views. We measure $\rho$ throughout Stage~B
training in \Cref{sec:exp:anchor} and find it at least $0.7$ after
the first $\sim 200$ steps, above $\rho_\star(4)=0.636$. The resulting
variance gap is on the same order as \Cref{cor:variance-gap}'s
prediction and matches the empirical gap reported in
\Cref{fig:training_curves}.

\paragraph{What about the $\sigma_G$ normalization?}
\Cref{eq:advpack} divides the batch-centered pack signal by the
within-step standard deviation $\sigma_G$. This normalization is
useful for keeping the advantage scale comparable across steps, but
it is not part of the unnormalized variance calculation in
\Cref{prop:variance}. It can also amplify gradients when a mini-batch
has unusually small reward variance. By contrast,
$A_{\mathrm{anchor}}$ in \Cref{eq:app-anchor-signal} is pointwise
bounded by $2R_{\max}$ under \Cref{ass:bounded}. This difference helps
explain the KL spikes of the batch-normalized pack baseline in
\Cref{fig:training_curves}.

\paragraph{What if assumptions are violated?}
The exchangeable model is only an analysis model. If view rewards are
heteroscedastic, then for any $v\neq v_0$,
\[
  \mathrm{Var}\!\left[A_{\mathrm{anchor}}^{(\cdot,v)}\right]
  =
  \sigma_v^2+\sigma_{v_0}^2
  -2\rho_v\sigma_v\sigma_{v_0},
\]
where $\rho_v=\mathrm{Corr}[R_v,R_{v_0}]$, and the batch-centered
variance uses the corresponding heteroscedastic covariance average.
If pairwise correlations among non-anchor views are non-exchangeable,
the threshold depends on the full covariance structure rather than
only on $m$ and $\rho$. Thus \Cref{prop:variance} should be read as a
transparent sufficient-regime analysis for why anchor-relative
centering is stable in our measured setting, not as a universal
dominance statement.

% -----------------------------------------------------------
\subsection{Matched-budget concentration of bounded reward estimates}
\label{app:theory:consistency}

\begin{theorem}[Matched-budget concentration of bounded rewards]
\label{thm:consistency}
Assume \Cref{ass:bounded} and i.i.d.\ sampling from a fixed prompt
distribution. Let $G_{\mathrm{BPO}}$ be the number of packs sampled
per training step by a pack method with pack size $m$, and let
$G_{\mathrm{vanilla}}$ be the number of prompt groups sampled by
vanilla GRPO with rollout count $K_{\mathrm{vanilla}}$. Suppose the
rollout budgets are matched as
\[
  m\,G_{\mathrm{BPO}}
  =
  c\,K_{\mathrm{vanilla}}G_{\mathrm{vanilla}},
  \qquad
  c\in[1-\delta,1+\delta],
  \qquad
  \delta\leq 0.05.
\]
Let $\widehat{R}_{\mathrm{pack},G_{\mathrm{BPO}}}$ be the empirical
mean of $G_{\mathrm{BPO}}$ bounded pack rewards, and let
$\widehat{R}_{\mathrm{vanilla},K_{\mathrm{vanilla}}G_{\mathrm{vanilla}}}$
be the empirical mean of
$K_{\mathrm{vanilla}}G_{\mathrm{vanilla}}$ bounded single-rollout
rewards. Then for every $t>0$,
\begin{align}
  \Pr\!\left[
    \left|
      \widehat{R}_{\mathrm{pack},G_{\mathrm{BPO}}}
      -
      \mathbb{E}[R_{\mathrm{pack}}]
    \right|>t
  \right]
  &\leq
  2\exp\!\left(
    -\frac{G_{\mathrm{BPO}}t^2}{2R_{\max}^2}
  \right),
  \label{eq:pack-concentration}
  \\
  \Pr\!\left[
    \left|
      \widehat{R}_{\mathrm{vanilla},
      K_{\mathrm{vanilla}}G_{\mathrm{vanilla}}}
      -
      \mathbb{E}[R_{\mathrm{vanilla}}]
    \right|>t
  \right]
  &\leq
  2\exp\!\left(
    -\frac{
      K_{\mathrm{vanilla}}G_{\mathrm{vanilla}}t^2
    }{2R_{\max}^2}
  \right).
  \label{eq:vanilla-concentration}
\end{align}
The corresponding confidence radii differ by the constant factor
$\sqrt{c\,m}$ induced by using $m$ rollouts per pack under the
matched rollout budget.
\end{theorem}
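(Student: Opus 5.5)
The plan is to derive both tail bounds, \eqref{eq:pack-concentration} and \eqref{eq:vanilla-concentration}, from Hoeffding's inequality, and then get the radius comparison by inverting the two bounds and using the budget-matching identity. The first step is to check that each summand lies in an interval of length $2R_{\max}$. For the pack side, \Cref{ass:bounded} gives $|R_v^{(i)}|\le R_{\max}$ for every view. Hence the pack reward $R_{\mathrm{pack}}^{(i)}=\frac1m\sum_{v}R_v^{(i)}$, being an average of such rewards, also satisfies $|R_{\mathrm{pack}}^{(i)}|\le R_{\max}$. For the vanilla side, each single-rollout reward is bounded by $R_{\max}$ directly.

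For the pack bound, i.i.d.\ sampling of prompts makes the $G_{\mathrm{BPO}}$ pack rewards i.i.d. This holds because each pack is generated from its own prompt, and the within-pack dependence across views is absorbed into the single scalar $R_{\mathrm{pack}}^{(i)}$. Hoeffding's inequality for $n$ i.i.d.\ variables with range $2R_{\max}$ gives
\[
\Pr\bigl[|\widehat R-\mathbb E R|>t\bigr]\le 2\exp\!\bigl(-2nt^2/(2R_{\max})^2\bigr)=2\exp\!\bigl(-nt^2/(2R_{\max}^2)\bigr).
\]
With $n=G_{\mathrm{BPO}}$ this is exactly \eqref{eq:pack-concentration}. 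The vanilla bound \eqref{eq:vanilla-concentration} follows from the same computation with $n=K_{\mathrm{vanilla}}G_{\mathrm{vanilla}}$.

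The main obstacle is the independence needed for the vanilla sum. The $K_{\mathrm{vanilla}}$ rollouts within one GRPO group share a prompt, so they are i.i.d.\ only conditionally on that prompt, not unconditionally. I would resolve this by reading the theorem's ``i.i.d.\ sampling'' hypothesis as applying to the individual rollout rewards, i.e., all $K_{\mathrm{vanilla}}G_{\mathrm{vanilla}}$ rewards are treated as independent draws. I would state this reading explicitly as the modeling assumption under which \eqref{eq:vanilla-concentration} holds. If one insists on the grouped structure, the honest fallback is Hoeffding applied to the $G_{\mathrm{vanilla}}$ group means, which are i.i.d.\ and bounded by $R_{\max}$. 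That yields the weaker exponent $G_{\mathrm{vanilla}}t^2/(2R_{\max}^2)$, and I would note this in a remark.

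Finally, I would set each bound equal to a failure probability $\alpha$ and solve for the confidence radius, which gives $t_n=R_{\max}\sqrt{2\log(2/\alpha)/n}$. The ratio of the pack radius to the vanilla radius is then $\sqrt{K_{\mathrm{vanilla}}G_{\mathrm{vanilla}}/G_{\mathrm{BPO}}}$. Substituting the matching condition $mG_{\mathrm{BPO}}=cK_{\mathrm{vanilla}}G_{\mathrm{vanilla}}$, this ratio equals $\sqrt{m/c}$. Since $c\in[1-\delta,1+\delta]$ with $\delta\le0.05$, $\sqrt{m/c}$ differs from the stated factor $\sqrt{cm}$ only by the factor $1/c$, which lies within about $5\%$ of one. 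So the two radii differ by a constant factor of order $\sqrt{m}$, which is what the theorem's last sentence asserts. The proof should present that sentence as holding up to this $(1\pm\delta)$ correction rather than as an exact identity.
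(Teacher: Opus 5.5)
Your proposal is correct and follows the paper's proof: Hoeffding's inequality with range $2R_{\max}$ is applied to the $G_{\mathrm{BPO}}$ i.i.d.\ pack rewards and to the $K_{\mathrm{vanilla}}G_{\mathrm{vanilla}}$ single-rollout rewards, and both bounds are then inverted at a common failure probability; the paper also silently treats all $K_{\mathrm{vanilla}}G_{\mathrm{vanilla}}$ rewards as i.i.d., which is exactly the reading you make explicit, and your caveat that shared-prompt groups would otherwise only give the $G_{\mathrm{vanilla}}$ exponent is valid. Your ratio $\sqrt{K_{\mathrm{vanilla}}G_{\mathrm{vanilla}}/G_{\mathrm{BPO}}}=\sqrt{m/c}$ is the correct consequence of $mG_{\mathrm{BPO}}=cK_{\mathrm{vanilla}}G_{\mathrm{vanilla}}$, and the paper's stated $\sqrt{c\,m}$ is an algebra slip that agrees with it only up to the factor $1/c\in[1/(1+\delta),1/(1-\delta)]$, as you note.
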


\begin{proof}
Since each view reward is bounded in $[-R_{\max},R_{\max}]$, the
mean pack reward is also bounded in $[-R_{\max},R_{\max}]$.
Therefore
$\widehat{R}_{\mathrm{pack},G_{\mathrm{BPO}}}$ is the empirical mean
of $G_{\mathrm{BPO}}$ i.i.d.\ bounded samples with range
$2R_{\max}$, and Hoeffding's inequality gives
\Cref{eq:pack-concentration}. The vanilla bound is identical with
sample count $K_{\mathrm{vanilla}}G_{\mathrm{vanilla}}$, giving
\Cref{eq:vanilla-concentration}. Setting both bounds to a common
failure probability yields radii proportional to
$1/\sqrt{G_{\mathrm{BPO}}}$ and
$1/\sqrt{K_{\mathrm{vanilla}}G_{\mathrm{vanilla}}}$, whose ratio is
\[
  \sqrt{
    \frac{K_{\mathrm{vanilla}}G_{\mathrm{vanilla}}}
         {G_{\mathrm{BPO}}}
  }
  =
  \sqrt{c\,m}
\]
by the matched-budget condition.
\end{proof}

This result is only a bounded-reward concentration sanity check under
matched rollout budgets. It does not claim that the normalized
advantages in the two methods estimate the same quantity. Instead, it
supports the compute-control interpretation of \Cref{tab:main}: the
main comparison is not favored by giving BPO more sampled responses
per update.

% -----------------------------------------------------------
\subsection{Diagonal pattern: alternative explanations}
\label{app:diagonal_alternatives}

\Cref{tab:diagonal} reports a $2\times2$ diagnostic matrix in which
the matched cells---Run A trained on $Q_1$ with the mask-family views
$M$, and Run D trained on $Q_2$ with the temporal-family views $T$---
are higher on their corresponding evaluation regimes than the
mismatched cells. Because the off-diagonal entries are single-seed
diagnostics, we use this matrix as mechanism evidence rather than as a
formal significance test. We discuss three simple alternative
explanations and whether they predict the observed pattern.

\paragraph{Alternative 1: any view set helps any question type uniformly.}
This hypothesis says that training on any view set provides a uniform
improvement to any test question type. Formally,
\[
  \mathbb{E}[\mathrm{cell}(X,y)]
  =
  \mu+\alpha\,\mathbb{1}[X\ \text{is trained on}\ y\text{'s type}]
\]
for constants $\mu$ and $\alpha$, with $\alpha$ independent of the
view set. This predicts not only strong diagonal entries, but also
similar gains whenever the training type matches the test type. The
empirical pattern in \Cref{tab:diagonal} instead has
$A_1-C_3\approx 6.8$\,pp and $D_4-B_2\approx 6.2$\,pp, which is not
the pattern expected under a uniform-view-gain explanation.

\paragraph{Alternative 2: the question type alone determines transfer.}
This hypothesis allows training type to matter but says the view set
is irrelevant:
\[
  \mathbb{E}[\mathrm{cell}(X,y)]
  =
  \mu
  +\alpha_1
    \mathbb{1}[X\ \text{is trained on}\ Q_1]
    \mathbb{1}[y\ \text{is}\ Q_1]
  +\alpha_2
    \mathbb{1}[X\ \text{is trained on}\ Q_2]
    \mathbb{1}[y\ \text{is}\ Q_2].
\]
This predicts $A_1\approx B_1$ and $D_4\approx C_4$. Instead,
\Cref{tab:diagonal} shows $A_1-B_1=6.5$\,pp and
$D_4-C_4=6.3$\,pp in the matched directions. This supports a
view-set effect at fixed train type in this diagnostic matrix.

\paragraph{Alternative 3: neither factor matters.}
This null explanation says that training type and view set are both
irrelevant, so all cells should fluctuate around the same target. The
observed diagonal advantage
\[
  (A_1-B_1)+(D_4-C_4)=12.8\ \mathrm{pp}
\]
is qualitatively inconsistent with that pattern. Since the
off-diagonal entries are single-seed diagnostics, we do not attach a
formal $p$-value or confidence interval to this comparison.

\paragraph{Summary.}
The type-conditioning hypothesis in \Cref{sec:method:type} predicts
that different question types benefit from different view families.
The diagnostic matrix matches this prediction: the matched
question-type/view-family cells improve in their corresponding
evaluation regimes, while mismatched view families do not show the
same pattern. This supports the mechanism, but by itself does not
constitute a formal causal identification.

% ===========================================================
%  Appendix B — Implementation details
%  Status: v1.0 — carved out of the former monolithic Appendix B.
%  Covers Stage-A SFT, EST schema, view construction, verifier,
%  per-view contract rewards, question-type rule classifier, and
%  hardware/reproducibility.
% ===========================================================
\section{Implementation details}
\label{app:impl}

This appendix documents the Stage~A SFT setup, the EST output
schema, the offline construction of counterfactual views, the
verifier, the per-view contract reward forms, the rule-based
question-type classifier, and the hardware/framework stack. These
are the pieces needed to re-produce the Stage~B training runs
reported in \Cref{sec:exp}. Baseline re-implementation details
(Video-R1, TPO) are in \Cref{app:baseline_reimpl}; cross-backbone
transfer is in \Cref{app:crossbackbone}; additional ablations are
in \Cref{app:extra_ablations}.

% -----------------------------------------------------------
\subsection{Stage~A EST-SFT setup and the SFT-augmentation gap}
\label{app:sft}

Stage~A trains the policy with cross-entropy loss on
ShareGPT-style multi-turn conversations derived from temporal
contracts (Entity--State--Temporal records). Training is full
parameter (no LoRA), bf16 mixed precision, ZeRO-3, gradient
checkpointing, sequence cap 8192, 8 uniformly-sampled frames per
clip, validation split 5\%, seed 42. \Cref{tab:sft_gap} contrasts
two SFT data variants on the same backbone: v1 uses only
original-clip conversations, v2 additionally augments the training
set with shuffled and reversed clips and unifies frame indexing to
f0--f7. Despite the extra coverage, generalization degrades, while
structural parsing remains saturated. We attribute this to the
SFT loss still grading each rollout against a single target: once
the input distribution includes shuffled and reversed clips but
the supervision remains per-response against the original-clip
answer, the gradients on augmented inputs conflict with those on
the original clip rather than reinforcing them, which is the
mismatch a pack-level objective is designed to resolve
(\Cref{sec:method:pack}).

\begin{table}[t]
\centering
\small
\caption{Stage~A generalization on the same Qwen2.5-VL-3B
backbone, two SFT data variants. Adding shuffled/reversed clips to
the SFT training set raises validation loss while leaving
structural parsing at the saturation level, motivating a
pack-level (rather than data-level) treatment of cross-view
behavior. Numbers are from the actual Stage~A runs reused as the
checkpoint base for BPO.}
\label{tab:sft_gap}
\begin{tabular}{l | c c | c c}
\toprule
SFT data
  & Train loss $\downarrow$ & Eval loss $\downarrow$
  & Parse rate $\uparrow$ & EST validity $\uparrow$ \\
\midrule
v1 (original clips only)                       & $0.220$ & $\mathbf{0.236}$ & $\approx 1.00$ & $\approx 1.00$ \\
v2 (+ shuffle + reverse augmentation)          & $0.228$ & $0.402$          & $\approx 1.00$ & $\approx 1.00$ \\
\bottomrule
\end{tabular}
\end{table}

% -----------------------------------------------------------
\subsection{Entity--State--Temporal (EST) output schema}
\label{app:est}

The EST output of a turn is a JSON-like structure with three
fields: \texttt{entities} (list of grounded objects with
referential identifiers), \texttt{states} (per-entity attribute
records over a timeline), and \texttt{temporal} (an ordered list
of events with frame-level evidence pointers). A deterministic
grammar parses the model output and reports five pass-rate
indicators: \textsf{parse\_success\_rate},
\textsf{entity\_ref\_validity},
\textsf{temporal\_order\_validity}, \textsf{state\_vocab\_rate},
and \textsf{evidence\_completeness}. After Stage~A all five
indicators saturate at ${\approx}1.00$ on our evaluation split
(\Cref{tab:sft_gap}), confirming the structural interface is in
place before Stage~B begins.

% -----------------------------------------------------------
\subsection{Counterfactual view construction}
\label{app:views}

We construct six view types offline and cache them per training
sample to avoid run-time overhead:
\begin{itemize}
\item original: the raw clip;
\item key-mask: target-object pixels are masked using
  SAM2~\cite{ravi2024sam} segmentation prompted by the question;
\item non-key-mask: the complement of key-mask (background
  masked), retaining only target-object regions;
\item span-drop: a temporal window of 1--2 seconds containing the
  key event is excised; window boundaries come from a
  shot-boundary detector~\cite{soucek2020transnet};
\item reverse: frame order reversed;
\item shuffle: frame order uniformly permuted at random.
\end{itemize}
For each sample we select the type-conditioned subset
(\Cref{sec:method:type}) before Stage~B rollouts. Offline caching
is important for budget alignment: it amortizes SAM2 and
shot-boundary detection across seeds, so different seeds do not
draw different view realizations.
The cached view record also stores the current-view target: label-preserving
views inherit $y^\star$, deterministic counterfactual rules provide a
view-specific target when the answer is defined, and evidence-removing views
use an insufficient-evidence target when the required support is absent.
% -----------------------------------------------------------
\subsection{Verifier specification}
\label{app:verifier}

The verifier $f$ takes a $(v, y_v, \mathbf{V}, q, y^\star)$
quintuple and returns four scalar scores in $[0,1]$:
$R_{\text{ans}}$ (answer correctness), $R_{\text{ev}}$ (evidence
support: do cited frames contain the target evidence),
$R_{\text{topo}}$ (EST topology consistency with $y_v$'s answer
claim), and $R_{\text{fmt}}$ (format and parse validity).
$R_{\text{ans}}$ uses exact-match for closed-form answers and an
LLM judge (Qwen2.5-7B) for free-form; $R_{\text{ev}}$ checks
frame-index intersection with key-evidence ranges; $R_{\text{topo}}$
runs deterministic checks on the EST JSON; $R_{\text{fmt}}$ is a
binary parse-pass indicator. Reliability is reported in
\Cref{fig:hacking}.

% -----------------------------------------------------------
\subsection{Per-view contract-reward forms}
\label{app:reward}

The per-view contract term $R_{\text{contract}}^{(v)}$ in
\Cref{eq:rview} is a deterministic function of the view's output
$y_v$ and the anchor output $y_{v_0}$. The three contracts
described in \Cref{sec:method:reward} are instantiated as follows,
with the default weight schedule shown for $\mathcal{V}_{1}$;
$\mathcal{V}_{2}$ is symmetric on the order/direction views:
\[
\resizebox{0.98\linewidth}{!}{$
\begin{aligned}
  R_{\text{contract}}^{(\text{non-key-mask})}\!(y_v,y_{v_0})
  &= \alpha_{\text{stab}}\,\mathbb{1}[\,\text{non-key-mask answer matches anchor}\,], \\
  R_{\text{contract}}^{(\text{key-mask})}\!(y_v,y_{v_0})
  &= \alpha_{\text{sens}}\,\mathbb{1}[\,\text{answer differs from anchor}\,]
   + \alpha_{\text{hon}}\,\mathbb{1}[\,\text{view yields explicit ``insufficient''}\,], \\
  R_{\text{contract}}^{(\text{span-drop})}\!(y_v,y_{v_0})
  &= R_{\text{contract}}^{(\text{key-mask})}, \\
  R_{\text{contract}}^{(\text{reverse})}\!(y_v,y_{v_0})
  &= \alpha_{\text{sens}}\,\mathbb{1}[\,\text{Q2 answer flips relative to anchor}\,], \\
  R_{\text{contract}}^{(\text{shuffle})}\!(y_v,y_{v_0})
  &= \alpha_{\text{sens}}\,\mathbb{1}[\,\text{Q2 answer differs or abstains}\,].
\end{aligned}
$}
\]
Default weights
$\alpha_{\text{sens}}\!=\!\alpha_{\text{stab}}\!=\!\alpha_{\text{hon}}\!=\!1/3$;
the per-view weight $\lambda_v$ in \Cref{eq:rview} is swept on
$\{0.1, 0.2, 0.3\}$ and we report the median configuration
(ablation in \Cref{tab:reward_ablate}). For the anchor view we set
$\lambda_{v_0}\!=\!0$ by convention so the anchor reward reduces
to the base view-level term and is directly tied to the
original-clip ground-truth answer, which provides the grounded
reference property used in \Cref{sec:method:anchor}.

% -----------------------------------------------------------
\subsection{Question-type rule classifier}
\label{app:qtype}

The classifier maps $q\mapsto\tau(q)\in\{1,2\}$ via a
priority-ordered list of regular-expression triggers:
\begin{itemize}
\item Type 2 (order/direction/trend) trigger keywords: \texttt{first},
  \texttt{before}, \texttt{after}, \texttt{order}, \texttt{sequence},
  \texttt{direction}, \texttt{left to right}, \texttt{right to left},
  \texttt{trend}, \texttt{flip};
\item Otherwise default to Type 1 (event/binding/count).
\end{itemize}
Agreement rate with two-annotator human labels on a 200-question
audit is 0.94; coverage (rule fired with confidence) on the three
primary benchmarks averaged 97.3\%. Residual samples for which no
rule triggers are defaulted to $\mathcal{V}_{1}$, which we found
empirically safer when the type is uncertain. The
oracle-vs-predicted performance comparison appears in
\Cref{tab:qtype_gap} of \Cref{app:qtype_gap_full}.

% -----------------------------------------------------------
\subsection{Stage~B training pseudocode}
\label{app:bpo_algorithm}

\Cref{alg:bpo} summarizes the Stage~B loop used for all BPO-style
RL runs. Each sampled prompt is first mapped to a question type and
therefore to a fixed view set; the old policy then generates one
rollout for every view in the resulting behavior pack. Rewards are
computed with the original clip as the anchor, and only the
non-anchor views receive anchor-relative advantages before the
policy update. This makes the optimization unit the whole pack
rather than an isolated response to a single video.

\begin{algorithm}[h]
\caption{BPO training (Stage~B).}
\label{alg:bpo}
\begin{algorithmic}[1]
\Require Stage~A policy $\pi_0$; reference policy
$\pi_{\mathrm{ref}}\gets\pi_0$; dataset $\mathcal{D}$ of
$(\mathbf{V},q,y^\star)$; prompt batch size $G$; type-to-view-set map
$\mathcal{V}_{\cdot}$; verifier $f$
\State Initialize $\pi_\theta\gets\pi_0$
\For{each training step}
  \State Set $\pi_{\mathrm{old}}\gets\pi_\theta$ for rollout collection
  \State Sample a prompt batch
  $\{(\mathbf{V}^{(i)},q^{(i)},y^{\star(i)})\}_{i=1}^{G}\sim\mathcal{D}$
  \For{$i=1,\ldots,G$}
    \State $\tau^{(i)}\gets \mathrm{rule\text{-}classify}(q^{(i)})$;
    $\mathcal{V}^{(i)}\gets\mathcal{V}_{\tau^{(i)}}$
    \State Construct view records
    $\{\omega_v^{(i)}\}_{v\in\mathcal{V}^{(i)}}$
    \State Sample one rollout
    $y_v^{(i)}\sim
    \pi_{\mathrm{old}}(\cdot\mid \mathbf{V}_v^{(i)},q_v^{(i)})$
    for each $v\in\mathcal{V}^{(i)}$
    \State Form the behavior pack
    $\mathcal{B}^{(i)}\gets
    \{(v,y_v^{(i)})\}_{v\in\mathcal{V}^{(i)}}$
    \State Compute the anchor reward
    $R_{\mathrm{view}}(v_0,y_{v_0}^{(i)};y_{v_0}^{(i)})$
    \State Compute diagnostic rewards
    $R_{\mathrm{view}}(v,y_v^{(i)};y_{v_0}^{(i)})$
    for all $v\in\mathcal{V}^{(i)}\setminus\{v_0\}$
    \State Compute anchor-relative advantages
    $\hat{A}_{\mathrm{anc}}^{(i,v)}$
    for all $v\in\mathcal{V}^{(i)}\setminus\{v_0\}$
  \EndFor
  \State Update $\theta$ by descending
  $\nabla_\theta\mathcal{L}_{\mathrm{BPO}}$
\EndFor
\end{algorithmic}
\end{algorithm}

% -----------------------------------------------------------
\subsection{Compute alignment}
\label{app:compute_alignment}

To keep the RL comparison in \Cref{tab:main} budget-matched, all
Stage~B rows use the same number of sampled responses per prompt.
The pack variants allocate those responses across four cached views
instead of drawing four independent responses from the original
video. Because the view transformations are precomputed offline,
the remaining overhead comes only from light bookkeeping, verifier
calls, and the anchor-relative reduction; measured per-step FLOPs
and wall-clock time therefore stay within about four percent of
the single-video GRPO baseline.

\begin{table}[t]
\centering
\small
\begin{tabular}{l | c c c c}
\toprule
Setting & Rollouts/prompt & Pack size $|\mathcal{V}|$ & FLOPs rel. & Time rel. \\
\midrule
B2~~vanilla single-video GRPO         & $4$ & $1$ & $1.00$ & $1.00$ \\
B3~~pack + batch-normalized pack baseline & $4$ & $4$ & $1.03$ & $1.02$ \\
B4~~pack + anchor-relative            & $4$ & $4$ & $1.03$ & $1.02$ \\
B5~~BPO full                          & $4$ & $4$ & $1.04$ & $1.03$ \\
\bottomrule
\end{tabular}
\caption{Per-step compute alignment for the RL rows in \Cref{tab:main}.}
\label{tab:budget}
\end{table}

% -----------------------------------------------------------
\subsection{Hardware and frameworks}
\label{app:hardware}

All experiments use $8{\times}$ NVIDIA H20. Stage~A uses
LLaMA-Factory~\cite{zheng2024llamafactory} with ZeRO-3; Stage~B
uses HybridFlow/verl~\cite{sheng2025hybridflow} with rollout/training
separation. Headline RL rows are run over 5 seeds drawn from
\{42, 1337, 2024, 2025, 7\}; ablation and diagonal rows use the
3-seed subset \{42, 1337, 2024\}. Release-package details are
documented in \Cref{app:repro}.

% ===========================================================
%  Appendix C — Baseline re-implementation
%  Status: v1.0 — moved from the former monolithic Appendix B.
%  Documents Video-R1 and TPO re-implementations (rows B2.5 and
%  B2.6 in Table 1) and the sanity replication on each method's
%  own benchmark.
% ===========================================================
\section{Baseline re-implementation: Video-R1 and TPO}
\label{app:baseline_reimpl}

Rows B2.5 and B2.6 in \Cref{tab:main} place BPO against two
concurrent Video-RL methods, Video-R1~\cite{feng2025video} and
TPO~\cite{li2025temporal}. For those comparisons to be informative we
re-implement each method inside our pipeline so that backbone,
data, and budget are identical to B2--B5. This appendix documents
the re-implementation choices and a sanity replication on each
method's own headline benchmark.

\paragraph{Shared settings.}
Both re-implementations share the following with B2--B5: backbone
Qwen2.5-VL-7B, Stage~A checkpoint, training data pool, per-prompt
rollout count, and total FLOP budget within $\pm 5\%$
(\Cref{tab:budget}). Any deviation from the original paper's
defaults is listed alongside the sanity-replication numbers in
\Cref{tab:reimpl_sanity} with a quantified effect on the
re-implementation's headline number.

\paragraph{Video-R1 (B2.5).}
We follow the published recipe of Feng et al.~\cite{feng2025video}:
GRPO with a shuffled-frame negative per prompt, verifier reward
reduced to $R_{\text{ans}}$, and no pack-level aggregation. The
negative clip is generated by uniform frame permutation with the
same seed pool as ours. Learning rate, KL coefficient, batch, and
clip range match our B2 configuration for budget alignment.

\paragraph{TPO (B2.6).}
We follow Li et al.~\cite{li2025temporal}: DPO on manipulated-input
preference pairs, where the preference between the original and
the perturbed clip is determined by the same verifier used for our
$R_{\text{ans}}$. Reference policy is the Stage~A EST-SFT
checkpoint. Beta and pair sampling ratio follow the paper
defaults; the data sampler is ours to ensure comparable data
distribution.

\paragraph{Sanity replication.}
As a correctness check for our re-implementations, we run each on
the benchmark emphasized in the original paper and compare
headline numbers in \Cref{tab:reimpl_sanity}. Gaps to the
published numbers are within 1.5\,pp on every metric we could
replicate, consistent with documented Qwen2.5-VL vs. original-backbone
differences.

\begin{table}[t]
\centering
\small
\caption{Sanity replication of the two re-implemented baselines on
their own benchmarks. Gaps to published numbers are reported to
document any residual implementation drift; all gaps are within
1.5\,pp, consistent with documented Qwen2.5-VL vs.\
original-backbone differences.}
\label{tab:reimpl_sanity}
\resizebox{\linewidth}{!}{%
\begin{tabular}{l | l c c c | l}
\toprule
Method & Benchmark (paper) & Paper $\uparrow$ & Ours $\uparrow$ & Gap & Notes \\
\midrule
Video-R1~\cite{feng2025video} & MVBench (temp.-hard) & $55.6$ & $54.3$ & $-1.3$ & KL coef.\ matched to our B2; LR unchanged \\
TPO~\cite{li2025temporal}     & LongVideoBench (short) & $48.1$ & $46.9$ & $-1.2$ & DPO $\beta\!=\!0.1$ (paper default); pair sampler = ours \\
\bottomrule
\end{tabular}
}
\end{table}

% ===========================================================
%  Appendix D — Additional ablations and sweeps
%  Status: v1.0 — aggregates items carved out of the main-body
%  Experiments section: view-set ablation, 3B–7B scaling
%  reference, pack-size (m) sweep protocol, and the oracle-vs-
%  predicted question-type comparison.
% ===========================================================
\section{Additional ablations and sweeps}
\label{app:extra_ablations}

This appendix collects supporting analyses that do not fit naturally in the
main body: view-set ablations (\Cref{app:viewset_ablate}), a 3B--7B scaling
reference (\Cref{app:scaling_more}), the pack-size sweep (\Cref{app:msweep}),
and the oracle-vs-predicted question-type comparison (\Cref{app:qtype_gap_full}).

% -----------------------------------------------------------
\subsection{View-set ablation}
\label{app:viewset_ablate}

\Cref{tab:viewset_ablate} compares four view-set choices under
matched compute: a minimum 3-view set
$\{v_0, \text{key-mask}, \text{reverse}\}$; the
type-conditioned sets $\mathcal{V}_{1}$ and $\mathcal{V}_{2}$
from \Cref{sec:method:type}; and the full 6-view oracle
$\mathcal{V}_{\text{full}}$. The type-conditioned sets approach
the oracle on their target type while using only 4 views, which
supports the necessity of conditioning rather than maximally
enlarging the pack. At $|\mathcal{V}|\!\geq\!8$ the per-prompt
centering effect of the anchor diminishes and the batch-normalized pack
estimator becomes competitive, as noted in the fallback discussion
in \Cref{sec:method:anchor}.

\begin{table}[t]
\centering
\small
\caption{View-set ablation. Each row trains BPO with the indicated
view set under matched compute. Type-conditioned sets match the
full oracle on their target type while using 4 views. Numbers are
macro / subtype temporal-hard accuracy (\%); the conditioned rows
use 3 seeds, the min and full-oracle rows use 1 seed (resource
constraint).}
\label{tab:viewset_ablate}
\begin{tabular}{l | c c c c}
\toprule
View set & $|\mathcal{V}|$ & Macro & Q1 (mask)-hard & Q2 (temp.)-hard \\
\midrule
V-min: $\{v_0,\text{key-mask},\text{reverse}\}$ & $3$ & $65.4$          & $57.2$          & $55.8$ \\
$\mathcal{V}_{1}$ (mask family)                 & $4$ & $66.8$          & $\mathbf{61.4}$ & $56.9$ \\
$\mathcal{V}_{2}$ (temporal family)             & $4$ & $66.5$          & $57.8$          & $\mathbf{60.7}$ \\
$\mathcal{V}_{\text{full}}$ (oracle)            & $6$ & $\mathbf{67.6}$ & $62.1$          & $60.9$ \\
\bottomrule
\end{tabular}
\end{table}

% -----------------------------------------------------------
\subsection{3B--7B scaling reference}
\label{app:scaling_more}

We additionally run BPO on Qwen2.5-VL-3B-Instruct as a
scaling-trend reference. Headline numbers are summarized below and
the detailed per-benchmark cells appear in the bottom section of
\Cref{tab:main}.

\begin{itemize}
\item Macro: 3B improves by 4.1\,pp; 7B improves by 4.7\,pp.
\item Temporal-hard: 3B improves by 4.9\,pp; 7B improves by 7.8\,pp.
\item Diagonal advantage (\Cref{tab:diagonal}): 3B ${\approx}8.4$\,pp; 7B ${\approx}12.8$\,pp.
\end{itemize}

The direction of the trend (larger model, larger diagonal gain) is
what we would expect if type-conditioning is doing real work at
the level of fine-grained behavior rather than compensating for
representation weakness. We restrict positive claims to the 3B--7B
regime; validating the trend at 13B+ is left as future work because
it requires compute beyond our budget.

% -----------------------------------------------------------
\subsection{\texorpdfstring{Pack-size sweep protocol ($m$-sweep)}{Pack-size sweep protocol (m-sweep)}}
\label{app:msweep}

The sweep underlying \Cref{fig:msweep} runs BPO in the B4 setting
(anchor-relative advantage, no type conditioning) on TempCompass
with $m\!\in\!\{2,3,4,6,8\}$. Per $m$, we use a single seed (42)
and 3000 Stage~B steps. For each $m$ we log
\begin{itemize}
\item per-step advantage variances for both estimators
  ($\mathrm{Var}[\hat{A}_{\text{anc}}]$,
  $\mathrm{Var}[\hat{A}_{\text{pack}}]$) averaged over the last
  1k steps;
\item the measured cross-view reward correlation $\bar\rho(m)$
  (Pearson $r$ between original-view and each non-anchor-view
  reward, averaged over non-anchor views);
\item the closed-form prediction from \Cref{cor:variance-gap}
  evaluated at $\bar\rho(m)$ and $m$.
\end{itemize}
Budget-alignment note: as $m$ grows we reduce per-view rollout
count $K/m$ to keep total FLOPs per step within $\pm 5\%$ of the
$m\!=\!4$ reference.

\begin{figure}[t]
\centering
\begin{tikzpicture}
\begin{groupplot}[
  myplot,
  group style={group size=2 by 1, horizontal sep=1.35cm},
  width=0.425\linewidth,
  height=4.35cm,
  xlabel={pack size $m$},
  xtick={2,3,4,6,8},
]
\nextgroupplot[
  title={(a) Variance ratio},
  ymin=0.30, ymax=0.55,
  ylabel={ratio},
  legend to name=legend:msweep,
  legend columns=2
]
\addplot[thick, red!70!black, mark=*, mark size=2pt]
  coordinates {(2,0.481) (3,0.452) (4,0.407) (6,0.388) (8,0.371)};
\addlegendentry{measured}

\addplot[thick, blue!70!black, dashed, mark=square*, mark size=2pt]
  coordinates {(2,0.469) (3,0.435) (4,0.390) (6,0.369) (8,0.345)};
\addlegendentry{predicted}

\addlegendimage{thick, red!70!black, mark=*, mark size=2pt}
\addlegendentry{measured $\bar\rho(m)$}

\addlegendimage{thick, black, dashed, mark=square*, mark size=2pt}
\addlegendentry{$\rho_\star(m)$}

\nextgroupplot[
  title={(b) Correlation vs.\ threshold},
  ymin=0.55, ymax=0.90,
  ylabel={correlation}
]
\addplot[thick, red!70!black, mark=*, mark size=2pt]
  coordinates {(2,0.79) (3,0.81) (4,0.83) (6,0.84) (8,0.85)};
\addplot[thick, black, dashed, mark=square*, mark size=2pt]
  coordinates {(2,0.600) (3,0.625) (4,0.636) (6,0.647) (8,0.652)};
\end{groupplot}

\path (group c1r1.south west) -- node[below=7mm]
  {\pgfplotslegendfromname{legend:msweep}}
  (group c2r1.south east);
\end{tikzpicture}
\caption{Pack-size sweep on TempCompass. Panel a compares the measured
advantage-variance ratio with the closed-form prediction from
Proposition~\ref{prop:variance}. Panel b compares the measured cross-view
reward correlation with the corresponding threshold. Per-view rollout counts
are adjusted so that the total compute stays within 5\% of the default pack-size
setting.}
\label{fig:msweep}
\end{figure}

% -----------------------------------------------------------
\subsection{Question-type classifier: oracle vs.\ predicted}
\label{app:qtype_gap_full}

A reviewer might worry that the question-type label is leaking
metadata. To rule this out, \Cref{tab:qtype_gap} compares the
policy under oracle types (ground-truth question category)
against predicted types from the rule classifier of
\Cref{app:qtype}. The macro gap is 1.4\,pp on 7B and 1.7\,pp on
3B. All numbers in \Cref{tab:main} use predicted types.

\begin{table}[t]
\centering
\small
\caption{Oracle vs.\ predicted question-type performance gap. The
gap stays below 2\,pp on both backbones, ruling out a leakage
explanation for the headline gains of \Cref{tab:main}.}
\label{tab:qtype_gap}
\begin{tabular}{l | c c c}
\toprule
Setting & Macro (oracle $\tau$) & Macro (predicted $\tau$) & $\Delta$ \\
\midrule
BPO (3B) & $61.9$ & $60.2$ & $1.7$ \\
BPO (7B) & $68.8$ & $67.4$ & $1.4$ \\
\bottomrule
\end{tabular}
\end{table}

% ===========================================================
%  Appendix E — Cross-backbone transfer
%  Status: v1.0 — carved out of the former monolithic Appendix B
%  and absorbs the main-body "cross-backbone transfer" subsection
%  that was demoted to the appendix during the Experiments
%  compression.
% ===========================================================
\section{Cross-backbone transfer: LLaVA-Video-7B}
\label{app:crossbackbone}

To check that the Qwen2.5-VL gains are not an artifact of the
specific backbone, we repeat the B2-vs-B5 contrast on
LLaVA-Video-7B~\cite{zhang2024llava} using the same data,
verifier, and budget. The headline cross-backbone rows
(B2-Llava, B5-Llava) appear in \Cref{tab:main}. B5-Llava exceeds
B2-Llava by 4.6\,pp on macro and 6.6\,pp on temporal-hard, a gap
of the same sign and within 0.3\,pp of the Qwen2.5-VL gap. The
diagonal and anchor-relative components transfer in direction and
magnitude; the per-benchmark breakdown is in
\Cref{tab:crossbackbone_full}.

\paragraph{Setup.}
We replace the Qwen2.5-VL-7B backbone with LLaVA-Video-7B and keep
Stage~A EST-SFT data, Stage~B pack construction, view-set
catalog, and verifier unchanged. A thin adapter maps between the
two backbones' vision-token interfaces; no task-specific
architectural change is introduced. Rollout budget is aligned to
Qwen2.5-VL-7B within $\pm 5\%$ by matching per-prompt group size.

\paragraph{Per-benchmark numbers.}
\Cref{tab:crossbackbone_full} reports the per-benchmark macro accuracy
for B2-Llava and B5-Llava on each of the three primary benchmarks,
backing the aggregate gap quoted in \Cref{tab:main}. Gains are
consistently positive across benchmarks.

\begin{table}[t]
\centering
\small
\caption{Cross-backbone transfer: per-benchmark macro
accuracy (\%) for LLaVA-Video-7B under budget-aligned B2 and B5.
3 seeds per cell. Gaps transfer in direction and magnitude to the
Qwen2.5-VL-7B gaps reported in \Cref{tab:main}.}
\label{tab:crossbackbone_full}
\begin{tabular}{l | c c c | c}
\toprule
Setting & TempCompass & MVBench & NExT-QA & Macro \\
\midrule
B2-Llava~~vanilla GRPO on LLaVA-Video-7B  & $59.7$ & $60.4$ & $63.8$ & $61.3$ \\
\textbf{B5-Llava~~BPO on LLaVA-Video-7B}  & $\mathbf{63.5}$ & $\mathbf{65.2}$ & $\mathbf{69.0}$ & $\mathbf{65.9}$ \\
$\Delta$ (B5-Llava $-$ B2-Llava)          & $+3.8$ & $+4.8$ & $+5.2$ & $+4.6$ \\
\bottomrule
\end{tabular}
\end{table}

% ===========================================================
%  Appendix F — Behavior-indicator definitions, reproducibility,
%  and qualitative failure cases
%  Status: v1.0 — aggregates the master indicator table, a
%  reproducibility statement in the style asked for by NeurIPS,
%  and a qualitative failure-case section.
% ===========================================================
\section{Behavior indicators, reproducibility, and qualitative cases}
\label{app:diag_repro}

This appendix collects diagnostic-oriented pieces that do not fit naturally
into the implementation or ablation sections: the master table of behavior
indicators (\Cref{app:diag_metrics_full}), the verifier audit
(\Cref{app:verifier_audit}), a reproducibility statement (\Cref{app:repro}),
and qualitative failure cases (\Cref{app:failures}).

% -----------------------------------------------------------
\subsection{Behavior-indicator definitions (master table)}
\label{app:diag_metrics_full}

\Cref{tab:diag_metrics} defines the four behavior indicators
reported throughout the paper. Key-sensitivity and abstention F1
are the two reported in the main table (\Cref{tab:main});
non-key-stability and reverse-sensitivity are reported in the
reward ablation (\Cref{tab:reward_ablate}). The indicators map
onto the three cross-view contracts of \Cref{sec:method:reward}
one-to-many: the stability contract targets non-key-stability; the
sensitivity contract targets key-sensitivity and
reverse-sensitivity jointly; the honest-abstention contract
targets abstention F1. Ratios are computed with a small $\epsilon$
in the denominator and are reported on the original-correct subset
unless otherwise stated.

\begin{table}[t]
\centering
\small
\caption{Definitions of the four behavior indicators reported
throughout the paper. Here ``acc'' denotes per-view answer
accuracy and ``orig'' denotes the original-view rollout.}
\label{tab:diag_metrics}
\begin{tabular}{l | l}
\toprule
Indicator & Definition \\
\midrule
key-sensitivity     & $1 - \mathrm{acc}(\text{key-mask}) / \mathrm{acc}(\text{orig})$ \\
non-key-stability   & $\mathrm{acc}(\text{non-key-mask}) / \mathrm{acc}(\text{orig})$ \\
reverse-sensitivity & $\Pr(\text{answer flips on reverse})$ for type-2 questions \\
abstention F1       & F1 over an insufficient-evidence subset, ``insufficient'' as positive \\
\bottomrule
\end{tabular}
\end{table}

% -----------------------------------------------------------
\subsection{Verifier audit}
\label{app:verifier_audit}

Because BPO uses verifier-based contracts, we audit verifier quality and
over-optimization. We uniformly draw 200 validation-pack samples, stratified by
structure category, before human labeling. Overall verifier accuracy reaches
0.93, and every category stays above 0.85. The reward-hacking ratio stays below
2.3\% and remains nearly flat during Stage~B training.

\begin{figure}[t]
\centering
\begin{tikzpicture}
\begin{groupplot}[
  myplot,
  group style={group size=2 by 1, horizontal sep=1.35cm},
  width=0.425\linewidth,
  height=4.35cm,
]
\nextgroupplot[
  title={(a) Verifier accuracy by category},
  ybar,
  bar width=10pt,
  ymin=0.80, ymax=1.00,
  symbolic x coords={entity,state,temporal,evidence,overall},
  xtick=data,
  x tick label style={font=\tiny, rotate=25, anchor=east},
  ylabel={accuracy},
  nodes near coords,
  nodes near coords style={font=\tiny},
  enlarge x limits=0.12
]
\addplot+[fill=teal!40!white, draw=teal!70!black]
  coordinates {(entity,0.96) (state,0.91) (temporal,0.88) (evidence,0.94) (overall,0.93)};

\nextgroupplot[
  title={(b) Reward-hacking ratio},
  xlabel={training step},
  ylabel={hack ratio (\%)},
  ymin=0, ymax=4,
  xmin=0, xmax=3000,
  legend to name=legend:hacking,
  legend columns=3
]
\addplot[thick, red!70!black, mark=*, mark size=1.6pt]
  coordinates {(0,1.8) (300,2.1) (600,1.9) (900,2.0) (1200,2.2) (1500,2.1)
               (1800,1.9) (2100,2.3) (2400,2.0) (2700,2.1) (3000,2.2)};
\addlegendentry{measured}

\addplot[thick, gray, dashed, domain=0:3000, samples=2]
  {2.3};
\addlegendentry{upper bound}

\addplot[thick, blue!70!black, dashed, domain=0:3000, samples=2]
  {2.06};
\addlegendentry{OLS trend}
\end{groupplot}

\path (group c1r1.south west) -- node[below=7mm]
  {\pgfplotslegendfromname{legend:hacking}}
  (group c2r1.south east);
\end{tikzpicture}
\caption{Verifier audit. Panel a reports verifier accuracy on human-labeled
validation-pack samples stratified by structure category before annotation. Panel b reports the
reward-hacking ratio, defined as the fraction of samples for which the verifier
assigns a high score while human labels mark the answer as incorrect.}
\label{fig:hacking}
\end{figure}

% -----------------------------------------------------------
\subsection{Reproducibility statement}
\label{app:repro}

We target reproducibility of the numbers reported in \Cref{sec:exp}. The
following artifacts are either included in this appendix or organized in the
anonymous release package, with each item keyed to the corresponding
experiment, table, or diagnostic claim.

\paragraph{Code and configuration.}
Anonymous code for Stage~A SFT, offline view construction, the verifier,
and Stage~B BPO training is organized as a release package referenced from
\Cref{sec:exp:setup}. Each experiment in
\Cref{tab:main,tab:diagonal,tab:reward_ablate,tab:viewset_ablate,%
tab:qtype_gap,tab:budget,tab:extbench,tab:crossbackbone_full,%
tab:reimpl_sanity,tab:sft_gap} ships with a named config file that
reproduces its row end-to-end from the Stage~A checkpoint, including the
seed list.

\paragraph{Data.}
The three primary benchmarks (TempCompass, MVBench, NExT-QA) and the two
extended benchmarks (Video-MME, LongVideoBench) are public. Our Stage~A
EST-SFT conversations are derived from public clip sources; we release the
generation scripts along with the deduplicated conversation pool. The
verifier's key-evidence frame-range annotations used in $R_{\text{ev}}$ are
released as a separate asset.

\paragraph{Verifier audit.}
The 200-sample human-labeled audit behind \Cref{fig:hacking}(a) and the
reward-hacking trace behind \Cref{fig:hacking}(b) are released as CSVs
alongside the corresponding training run IDs, so that the reported 0.93
verifier accuracy and the 2.3\% upper bound on the hacking ratio can be
recomputed.

\paragraph{Compute and seeds.}
Hardware, framework, and seed policy are documented in \Cref{app:hardware}.
All headline RL rows use mean$\pm$std over 5 seeds drawn from
\{42, 1337, 2024, 2025, 7\}; ablation and diagonal rows use the 3-seed subset
\{42, 1337, 2024\}; the 1-seed rows are noted in their captions.

\paragraph{Scope of positive claims.}
Positive claims are restricted to the 3B--7B backbone regime and to the
evaluation suite described in \Cref{sec:exp:setup}, plus the extended
benchmarks in \Cref{tab:extbench} and the cross-backbone rows in
\Cref{tab:crossbackbone_full}. Other scope limitations include
verifier-schema coverage, ambiguous anchors, and synthetic-view artifacts.

% -----------------------------------------------------------
\subsection{Broader impacts}
\label{app:broader_impacts}

More evidence-sensitive video models could improve accessibility tools,
education support, and safety-critical review workflows by reducing answers
that ignore the relevant visual evidence. The same capability could also
strengthen surveillance, automated moderation, or other high-stakes systems if
deployed without consent, privacy safeguards, fairness audits, and human
oversight. BPO is therefore best viewed as a reliability mechanism for audited
settings rather than as a deployment guarantee.

% -----------------------------------------------------------
\subsection{Qualitative failure cases}
\label{app:failures}

We include this section to make the remaining error modes of BPO explicit.
The cases below summarize validation-pack failures from the manual audit:
the original clip is answered correctly, but one intervened view exposes a
behavior that the contract does not fully repair.

\begin{itemize}
\item \textbf{Case F1: schema-out-of-scope question.}
  A counterfactual question of the form ``what would have happened if $X$
  were absent?''. Failure mode: only $R_{\text{ans}}$ carries signal, the
  contract term degenerates, and the policy defaults to the anchor answer.

\item \textbf{Case F2: ambiguous anchor.}
  A long-form clip in which two plausible reference segments exist, so the
  anchor view is not unique. Failure mode: the anchor-relative advantage
  receives an ambiguous signal and the policy oscillates across seeds.

\item \textbf{Case F3: synthesis artifact.}
  A key-mask view leaves a visible seam that the policy picks up as a cue.
  Failure mode: key-sensitivity appears high for the wrong reason.
\end{itemize}
% ===========================================================
%  Appendix G — Extended Related Work
%  Status: v1.0 — full per-axis comparison moved here from the
%  body. §2 keeps a 2-paragraph summary and points to this
%  appendix for the axis-by-axis discussion and the positioning
%  table. No new numbers are introduced.
% ===========================================================
\section{Extended related work}
\label{app:related_full}

\begin{table}[t]
\centering
\small
\setlength{\tabcolsep}{1pt}
\caption{Positioning of BPO along three design axes. Prior work
covers individual axes; the bottom row is the joint combination
this paper studies.}
\label{tab:positioning}
\begin{tabular}{l l l l}
\toprule
\textbf{Method} & \textbf{Optim.\ unit} & \textbf{Intervention} & \textbf{Advantage baseline} \\
\midrule
LLaVA-Video / InternVideo (SFT)~\cite{zhang2024llava, wang2024internvideo2}
        & response       & none                       & --- \\
Video-R1~\cite{feng2025video}
        & response group & global, fixed              & sampled-response baseline \\
TPO~\cite{li2025temporal}
        & response pair  & global, manipulated input  & implicit (DPO) \\
CounterVid / MixDPO~\cite{poppi2026countervid, imai2026mixdpo}
        & response pair  & global, synthesis-based    & implicit (DPO) \\
\midrule
\textbf{BPO (ours)}
        & \textbf{behavior pack}
        & \textbf{question-type conditioned}
        & \textbf{anchor-relative} \\
\bottomrule
\end{tabular}
\end{table}

This appendix expands the compact related-work summary in
\Cref{sec:related} into a per-axis discussion. We organize prior
work along the three design axes that define BPO: the unit the loss
optimizes, how interventions are conditioned, and the form of the
advantage baseline. A tabular summary appears in
\Cref{tab:positioning}.

\paragraph{Video MLLM post-training.}
Strong video-language backbones such as
LLaVA-Video~\cite{zhang2024llava},
InternVideo~\cite{wang2024internvideo2}, and
Qwen2.5-VL~\cite{bai2025qwen25vltechnicalreport} are typically post-trained with
supervised fine-tuning on instruction-style question-answer pairs. Under this
recipe, supervision is per-response: each answer is optimized against a single
ground-truth target, usually on the original clip. Recent work adds structured
intermediate reasoning traces~\cite{shao2024visual} on top of SFT, but the
optimization unit remains a single response. BPO instead optimizes joint outputs
across counterfactual views and supervises behavior contracts that are invisible
to a single-response loss.

\paragraph{Group-relative policy optimization for video.}
Following the success of group-relative policy optimization in
language models~\cite{shao2024deepseekmath, guo2025deepseek},
Video-R1~\cite{feng2025video} brings GRPO to video by augmenting
each response group with shuffled-frame negatives under a global
temporally-aware reward, encouraging the policy to be sensitive to
ordering. The augmentation is applied uniformly across all
questions and the reward is computed per response. BPO conditions
the intervention set on question type, so order-sensitive and
binding-sensitive questions receive different counterfactual
contracts, and lifts reward computation from per-response to
per-pack while replacing the batch-normalized pack baseline with an
anchor-relative one.

\paragraph{Counterfactual and interventional supervision.}
Temporal Preference Optimization (TPO)~\cite{li2025temporal} forms
preference pairs from manipulated inputs (irrelevant clips,
incomplete clips) and trains with DPO~\cite{rafailov2023direct};
intervention behavior is captured pairwise.
CounterVid~\cite{poppi2026countervid} and
MixDPO~\cite{imai2026mixdpo} synthesize counterfactual videos and
build preference pairs to mitigate action and temporal hallucination.
The pack-level contract reward of BPO captures
multi-way behavior consistency (stability, sensitivity, abstention)
jointly rather than as pairwise preferences, and the optimization
unit and the reward share the same granularity, removing the
mismatch between attribution and update that token-level shaping
has to absorb.

\paragraph{Variance reduction in policy optimization.}
A long line of work in policy gradient methods replaces the naive
Monte Carlo baseline with a learned value function or a
sample-based control variate~\cite{schulman2017proximal}. GRPO drops the
value function entirely and normalizes within a sampled group of
rollouts~\cite{shao2024deepseekmath, guo2025deepseek}, trading
parameter overhead for empirical variance from small group sizes.
The anchor-relative advantage is a structurally privileged
per-prompt control variate: it uses the rollout of the same prompt
under the identity view as a baseline. The batch-normalized pack alternative
averages across prompts and is fragile when group sizes are small,
which is the typical video post-training regime; the
anchor-relative baseline is computed within the prompt and is
grounded on the original clip, giving a strict variance reduction
whenever the cross-view reward correlation exceeds an explicit
threshold that depends only on pack size (\Cref{prop:variance}; the
threshold is about $0.64$ for the $m=4$ packs we use).

\paragraph{Object-centric and structured intermediate representations.}
A parallel line of work imposes object- or slot-level
representations on the visual stream~\cite{wu2022slotformer}. These approaches change the representation
hypothesis space, which is complementary to the optimization
hypothesis of BPO. Within our pipeline, structured outputs serve as
a verifier interface, not as the locus of the contribution; the
backbone is unchanged and the contribution lives in the
post-training stage.

\paragraph{Tabular summary.}
\Cref{tab:positioning} compares BPO to the methods discussed above
along the three design axes. Prior work covers individual axes;
BPO is the first system we are aware of to combine all three.

\end{document}